\newcolumntype{C}[1]{>{\Centering}m{#1}}
\newtheorem{theorem}{Theorem}
\newtheorem{definition}[theorem]{Definition}
\numberwithin{theorem}{section}
\numberwithin{equation}{section}
\tikzset{fontscale/.style = {font=\relsize{#1}}
    }
\def\t2{\tfrac12}
\def\scriptf{{\mathcal F}}
\def\scripta{{\mathcal A}}
\def\scriptd{{\mathcal D}}
\def\scriptn{{\mathcal N}}
\title{On Characterizing the Capacity of Neural Networks using Algebraic Topology}
\begin{document}

\twocolumn[
\arxivtitle{On Characterizing the Capacity of Neural Networks using Algebraic Topology}

\arxivsetsymbol{equal}{*}

\begin{arxivauthorlist}
\arxivauthor{William H.~Guss}{cmu} \arxivauthor{Ruslan Salakhutdinov}{cmu} \\
\texttt{\{wguss, rsalakhu\}@cs.cmu.edu}
\end{arxivauthorlist}

\arxivcorrespondingauthor{William Guss}{wguss@cs.cmu.edu}

\arxivaffiliation{cmu}{Machine Learning Department, Carnegie Mellon University, Pittsburgh, Pennsylvania}

\arxivkeywords{Machine Learning, arxiv}

\vskip 0.3in
]

\printAffiliationsAndNotice{}

\begin{abstract}%
The learnability of different neural architectures can be characterized directly by computable measures of data complexity. In this paper, we reframe the problem of architecture selection as understanding how data determines the most expressive and generalizable architectures suited to that data, beyond inductive bias. After suggesting algebraic topology as a measure for data complexity, we show that the power of a network to express the topological complexity of a dataset in its decision region is a strictly limiting factor in its ability to generalize. We then provide the first empirical characterization of the topological capacity of neural networks. Our empirical analysis shows that at every level of dataset complexity, neural networks exhibit topological phase transitions. This observation allowed us to connect existing theory to empirically driven conjectures on the choice of architectures for fully-connected neural networks. 
\end{abstract}

\section{Introduction}

Deep learning has rapidly become one of the most pervasively applied techniques in machine learning. From computer vision \cite{krizhevsky2012imagenet} and reinforcement learning \cite{mnih2013playing} to natural language processing \cite{wu2016google} and speech recognition \cite{hinton2012deep}, the core principles of hierarchical representation and optimization central to deep learning have revolutionized the state of the art; see \citet{Goodfellow-et-al-2016}. In each domain, a major difficulty lies in selecting the architectures of models that most optimally take advantage of structure in the data. In computer vision, for example, a large body of work (\cite{DBLP:journals/corr/SimonyanZ14a}, \cite{DBLP:journals/corr/SzegedyLJSRAEVR14}, \cite{DBLP:journals/corr/HeZRS15}, etc.) focuses on improving the initial architectural choices of \citet{krizhevsky2012imagenet} by developing novel network topologies and optimization schemes specific to vision tasks.  Despite the success of this approach, there are still not general principles for choosing architectures in arbitrary settings, and in order for deep learning to scale efficiently to new problems and domains without expert architecture designers, the problem of architecture selection must be better understood.

Theoretically, substantial analysis has explored how various properties of neural networks, (eg. the depth, width, and connectivity) relate to their expressivity and generalization capability (\cite{raghu2016expressive}, \cite{daniely2016toward}, \cite{guss2016deep}). However, the foregoing theory can only be used to determine an architecture in practice if it is understood how expressive a model need be in order to solve a problem. On the other hand, neural architecture search (NAS) views  architecture selection as a compositional hyperparameter search (\cite{DBLP:journals/corr/SaxenaV16}, \cite{DBLP:journals/corr/FernandoBBZHRPW17}, \cite{45826}). As a result NAS ideally yields expressive and powerful architectures, but it is often difficult to interpret the resulting architectures beyond justifying their use from their empirical optimality.

We propose a third alternative to the foregoing:  data-first architecture selection. In practice, experts design architectures with some inductive bias about the data, and more generally, like any hyperparameter selection problem, the most expressive neural architectures for learning on a particular dataset are solely determined by the nature of the true data distribution. Therefore, architecture selection can be rephrased as follows: \emph{given a learning problem (some dataset), which architectures are suitably regularized and expressive enough to learn and generalize on that problem? } 

\begingroup

A natural approach to this question is to develop some objective measure of data complexity, and then characterize neural architectures by their ability to learn subject to that complexity. Then given some new dataset, the problem of architecture selection is distilled to computing the data complexity and choosing the appropriate architecture. 

For example, take the two datasets $\scriptd_1$ and $\scriptd_2$ given in Figure \ref{fig:increasing_complexity}(a,b) and Figure \ref{fig:increasing_complexity}(c,d) respectively. The first dataset, $\scriptd_1$, consists of positive examples sampled from two disks and negative examples from their compliment. On the right, dataset $\scriptd_2$ consists of positive points sampled from two disks and two rings with hollow centers. Under some geometric measure of complexity $\scriptd_2$ appears more 'complicated' than $\scriptd_1$ because it contains more holes and clusters. As one trains single layer neural networks of increasing hidden dimension on both datasets, \emph{the minimum number of hidden units required to achieve zero testing error is ordered according to this geometric complexity.} Visually in Figure \ref{fig:increasing_complexity}, regardless of initialization no single hidden layer neural network with $\leq $ 12 units, denoted $h_{\leq 12}$, can express the two holes and clusters in $\scriptd_2$. Whereas on the simpler $\scriptd_1$, both $h_{12}$ and $h_{26}$ can express the decision boundary perfectly. Returning to architecture selection, one wonders if this characterization can be extrapolated; that is, is it true that for datasets with 'similar' geometric complexity to $\scriptd_1$, any architecture with $\geq$ 12 hidden learns perfectly, and likewise for those datasets similar in complexity to $\scriptd_2$, architectures with $\leq 12$ hidden units can never learn to completion?

\subsection{Our Contribution}

In this paper, we formalize the above notion of geometric complexity in the language of algebraic topology. We show that questions of architecture selection can be answered by understanding the 'topological capacity' of different neural networks. In particular, a geometric complexity measure, called persistent homology, characterizes the capacity of neural architectures in direct relation to their ability to generalize on data. Using persistent homology, we develop a method which  gives the first empirical insight into the learnability of different architectures as data complexity increases. In addition, our method allows us to generate conjectures which tighten known theoretical bounds on the expressivity of neural networks. Finally, we show that topological characterizations of architectures areuseful in practice by presenting a new method, topological architecture selection, and applying it to several OpenML datasets.

\endgroup

\section{Background}

\subsection{General Topology}

In order to more formally describe notions of geometric complexity in datasets, we will turn to the language of topology. Broadly speaking, topology is a branch of mathematics that deals with characterizing shapes, spaces, and sets by their \emph{connectivity}. In the context of characterizing neural networks, we will work towards defining the topological complexity of a dataset in terms of how that dataset is 'connected', and then group neural networks by their capacity to produce decision regions of the same connectivity.

\begingroup

In topology, one understands the relationships between two different spaces of points by the \emph{continuous maps} between them.  Informally, we say that two topological spaces $A$ and $B$ are \emph{equivalent} ($A \cong B$) if there is a continuous function $f: A \to B$ that has an inverse $f^{-1}$ that is also continuous. When $f$ exists, we say that $A$ and $B$ are \emph{homeomorphic} and $f$ is their \emph{homeomorphism}; for a more detailed treatment of general topology see \citet{bredon2013topology}. In an informal way, $\scriptd_1 \not \cong \scriptd_2$ in Figure \ref{fig:increasing_complexity} since if there were a homeomorphism $f: \scriptd_1 \to \scriptd_2$ at least one of the clusters in $\scriptd_1$ would need to be split discontinuously in order to produce the four different regions in~$\scriptd_2$.

\begin{figure}[t]
	\begin{center}
		\includegraphics[width=0.47\textwidth]{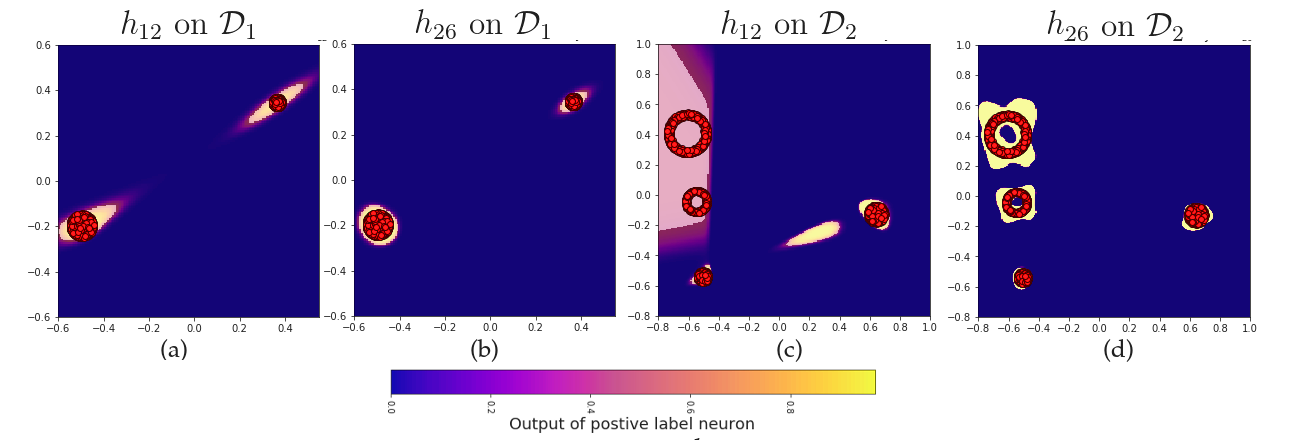}
	\end{center}
\vspace{-0.1in}
	\caption{The positive label outptus of single hidden layer neural networks, $h_{12}$ and $h_{26}$, of 2 inputs with 12 and 26 hidden units  respectively after training on datasets $\scriptd_1$ and $\scriptd_2$ with positive examples in red. Highlighted regions of the output constitute the positive decision region.}
	\label{fig:increasing_complexity}
\end{figure}

The power of topology lies in its capacity to differentiate sets (topological spaces) in a meaningful geometric way that discards certain irrelevant properties such as rotation, translation, curvature, etc. For the purposes of defining geometric complexity, non-topological properties\footnote{A \emph{topological property} or \emph{invariant} is one that is preserved by a homeomorphism. For example, the number of holes and regions which are disjoint from one another are topological properties, whereas curvature is not.\label{foot:top_prop}} like curvature would further fine-tune architecture selection--say if $\scriptd_2$ had the same regions but with squigly (differentially complex) boundaries, certain architectures might not converge--but as we will show, grouping neural networks by 'topological capacity' provides a powerful minimality condition. That is, we will show that if a certain architecture is incapable of expressing a decision region that is equivalent in topology to training data, then there is no hope of it ever generalizing to the true data.
\endgroup
\subsection{Algebraic Topology}

Algebraic topology provides the tools necessary to not only build the foregoing notion of topological equivalence into a measure of geometric complexity, but also to compute that measure on real data (\cite{betti1872nuovo}, \cite{dey1998computational}, \cite{bredon2013topology}). At its core, algebraic topology takes topological spaces (shapes and sets with certain properties) and assigns them algebraic objects such as \emph{groups}, \emph{chains}, and other more exotic constructs. In doing so, two spaces can be shown to be topologically equivalent (or distinct) if the algebraic objects to which they are assigned are isomorphic (or not). Thus algebraic topology will allow us to compare the complexity of decision boundaries and datasets by the objects to which they are assigned.

Although there are many flavors of algebraic topology, a powerful and computationally realizable tool is homology.
\begin{definition}[Informal, \cite{bredon2013topology}]
	If $X$ is a topological space, then $H_n(X) = \mathbb{Z}^{\beta_n}$ is called \textbf{the $n^{th}$ \emph{homology group} of $X$} if the power $\beta_n$ is the number of 'holes' of dimension $n$ in $X$. Note that $\beta_0$ is the number of separate connected components. We call $\beta_n(X)$ the $n$th Betti number of $X$. Finally, the homology\footnote{This definition of homology makes many assumptions on $X$ and the base field of computation, but for introductory purposes, this informality is edifying.} of $X$ is defined as $H(X) = \{H_n(X)\}_{n=0}^\infty.$
	\label{def:homology}
\end{definition}  

Immediately homology brings us closer to defining the complexity of $\scriptd_1$ and $\scriptd_2$. If we assume that $\scriptd_1$ is not actually a collection of $N$ datapoints, but really the union of $2$ solid balls, and likewise that $\scriptd_2$ is the union of $2$ solid balls and 2 rings, then we can compute the homology directly. In this case $H_0(\scriptd_1) = \mathbb{Z}^2$ since there are two connected components\footnote{Informally, a \emph{connected component} is a set which is not contained in another connected set except for itself.}; $H_1(\scriptd_1) = \{0\}$ since there are no circles (one-dimensional holes); and clearly, $H_n(\scriptd_1) = \{0\}$ for $n \geq 2 $. Performing the same computation in the second case, we get $H_0(\scriptd_2) = \mathbb{Z}^4$ and $H_1(\scriptd_2) = \mathbb{Z}^2$ as there are $4$ seperate clusters and $2$ rings/holes. With respect to any reasonable ordering on homology, $\scriptd_2$ is more complex than $\scriptd_1$. The measure yields non-trivial differentiation of spaces in higher dimension. For example, the homology of a hollow donut is $\{\mathbb{Z}^1,\mathbb{Z}^2,\mathbb{Z}^1,0, \dots\}$.

Surprisingly, the homology of a space contains a great deal of information about its topological complexity\footnotemark[\getrefnumber{foot:top_prop}]. 
The following theorem suggests the absolute power of homology to group topologically similar spaces, and therefore neural networks with topologically similar decision regions. 
\begin{theorem}[Informal]\label{thm:top_to_hom}
Let $X$ and $Y$ be topological spaces. If $X \cong Y$ then $H(X) = H(Y)$.\footnote{Equality of $H(X)$ and $H(Y)$ should be interpreted as isomorphism between each individual $H_i(X)$ and $H_i(Y)$.}
\end{theorem}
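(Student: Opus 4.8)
The plan is to prove that homology is a \emph{topological invariant} by exhibiting it as a \emph{functor} from topological spaces (with continuous maps) to graded abelian groups (with homomorphisms), and then invoking the elementary categorical fact that any functor carries isomorphisms to isomorphisms. To make this rigorous I would first replace the informal Definition with a concrete model, say singular homology, where $H_n(X)$ is built from the singular chain complex whose group of $n$-chains $C_n(X)$ consists of formal $\integers$-linear combinations of continuous maps $\sigma \colon \Delta^n \to X$ out of the standard $n$-simplex. The argument then cleanly separates into a construction step and a purely formal step.

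First I would construct, for every continuous map $f \colon X \to Y$, an induced homomorphism $f_* \colon H_n(X) \to H_n(Y)$ for each $n$. This arises from pushing singular simplices forward: define $f_\#(\sigma) = f \circ \sigma$ on generators and extend linearly to chain groups $C_n(X) \to C_n(Y)$. The crucial lemma is that $f_\#$ commutes with the boundary operator, $\partial \circ f_\# = f_\# \circ \partial$, which holds because post-composition by $f$ respects the face maps defining $\partial$; a chain map descends to homology and yields $f_*$. Next I would verify the two functoriality identities, $(\mathrm{id}_X)_* = \mathrm{id}_{H_n(X)}$ (immediate, since $\mathrm{id}_\#$ is the identity on chains) and $(g \circ f)_* = g_* \circ f_*$ (from associativity of composition at the simplex level). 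With these in hand the theorem is almost automatic: given a homeomorphism $f \colon X \to Y$ with continuous inverse $f^{-1}$, applying functoriality to $f^{-1} \circ f = \mathrm{id}_X$ and $f \circ f^{-1} = \mathrm{id}_Y$ gives $(f^{-1})_* \circ f_* = \mathrm{id}_{H_n(X)}$ and $f_* \circ (f^{-1})_* = \mathrm{id}_{H_n(Y)}$. Hence $f_*$ is an isomorphism $H_n(X) \xrightarrow{\cong} H_n(Y)$ with inverse $(f^{-1})_*$, for every $n$, which is exactly the asserted equality $H(X) = H(Y)$.

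The main obstacle is not the homeomorphism-invariance deduction itself, which is the two-line formal consequence of functoriality above, but rather the construction that underpins it: rigorously defining the singular chain complex, checking $\partial^2 = 0$ so the homology groups are well defined, and confirming that $f_\#$ is a genuine chain map. This is precisely the technical content the paper has deliberately suppressed in its informal Definition. I would also stress that the far deeper statement, which this theorem does \emph{not} require, is \emph{homotopy} invariance; establishing that demands the prism/chain-homotopy operator and is where the real labor of the subject lies. For a homeomorphism, continuity of the inverse alone suffices and no homotopy machinery is invoked.
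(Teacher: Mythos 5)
Your proof is correct, and it is the standard argument. For comparison: the paper never actually proves this theorem --- it is stated informally in the main text and delegated to the literature (Bredon), and the only formal material the paper supplies is the Eilenberg--Steenrod axiomatization in its appendix, where a homology theory is \emph{by definition} a functor assigning homomorphisms $f_*$ to maps $f$. Under that axiomatic framing, your concluding step is the entire proof: a functor sends the identities $f^{-1} \circ f = \mathrm{id}_X$ and $f \circ f^{-1} = \mathrm{id}_Y$ to $(f^{-1})_* \circ f_* = \mathrm{id}_{H_n(X)}$ and $f_* \circ (f^{-1})_* = \mathrm{id}_{H_n(Y)}$, so $f_*$ is an isomorphism. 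So the formal core of your argument coincides with what the paper's framework gives for free. What you add, and what the paper suppresses, is the construction that such a functor exists at all: defining the singular chain complex, checking $\partial^2 = 0$, and verifying that $f_\#(\sigma) = f \circ \sigma$ is a chain map. That extra work buys self-containedness --- your proof does not presuppose the axioms but exhibits a model satisfying them --- at the cost of length. Your closing remark is also well taken: homeomorphism invariance needs only functoriality, whereas the paper's first axiom (homotopy invariance, $f \simeq g \Rightarrow f_* = g_*$) is strictly stronger and is not required here.
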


Intuitively, Theorem \ref{thm:top_to_hom} states that  number of  'holes' (and in the case of $H_0(X)$, connected components) are topologically invariant, and can be used to show that two shapes (or decision regions) are different.

\subsection{Computational Methods for Homological Complexity}

In order to compute the homology of both $\scriptd_1$ and $\scriptd_2$ we needed to assume that they were actually the geometric shapes from which they were sampled. Without such assumptions, \emph{for any dataset $\scriptd$} a $H(\scriptd) = \{\mathbb{Z}^N, 0, \dots\}$ where $N$ is the number of data points. This is because, at small enough scales each data point can be isolated as its own connected component; that is, as sets each pair of different positive points $d_1, d_2 \in \scriptd$ are disjoint. To properly utilize homological complexity in better understanding architecture selection, we need to be able to compute the homology of the data directly and still capture meaningful topological information.

\begingroup

\begin{figure}[t]
	\begin{center}
		\includegraphics[width=0.48\textwidth]{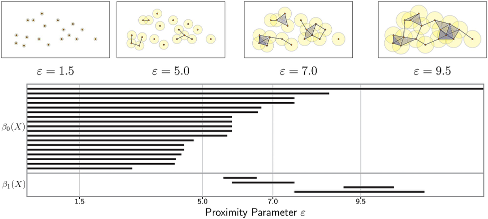}
\vspace{-0.1in}
		\caption{An illustration of computing persistent homology on a collection of points (\cite{topaz2015topological})} \label{fig:persistenthomology}
	\end{center}
\end{figure}

Persistent homology, introduced in \citet{zomorodian2005computing}, avoids the trivialization of computation of dataset homology by providing an algorithm to calculate the homology of a \emph{filtration} of a space. Specifically, a filtration is a topological space $X$ equipped with a sequence of subspaces $X_0 \subset X_1 \subset \dots \subset X$. In Figure \ref{fig:persistenthomology} one such particular filtration is given by growing balls of size $\epsilon$ centered at each point, and then letting $X_\epsilon$ be the resulting subspace in the filtration. Define $\beta_n(X)$ to be the $n$th Betti number of the homology $H(X_\epsilon)$ of $X_\epsilon$. Then for example at $\epsilon = 1.5$, $\beta_0(X_{\epsilon}) = 19$ and $\beta_1(X_{\epsilon}) = 0$ as every ball is disjoint. At $\epsilon = 5.0$ some connected components merge and $\beta_0(X_\epsilon) = 12$ and $\beta_1(X_\epsilon) = 0$. Finally  at $\epsilon = 7$, the union of the balls forms a hole towards the center of the dataset and $\beta_1(X_{\epsilon}) > 0$ with $\beta_0(X_\epsilon) =4.$

All together the change in homology and therefore Betti numbers for $X_{\epsilon}$ as $\epsilon$ changes can be summarized succinctly in the \emph{persistence barcode diagram} given in Figure \ref{fig:persistenthomology}. Each bar in the section $\beta_n(X)$ denotes a 'hole' of dimension $n$. The left endpoint of the bar is the point at which homology detects that particular component, and the right endpoint is when that component becomes indistinguishable in the filtration. When calculating the persistent homology of datasets we will frequently use these diagrams.

With the foregoing algorithms established, we are now equipped with the tools to study the capacity of neural networks in the language of algebraic topology.

\endgroup

\section{Homological Characterization of Neural Architectures}

In the forthcoming section, we will apply persistent homology to empirically characterize the power of certain neural architectures. To understand why homological complexity is a powerful measure for differentiating architectures, we present the following principle.

Suppose that $\scriptd$ is some dataset drawn from a joint distribution $F$ with continuous CDF on some topological space $X \times \{0,1\}$. Let $X^+$ denote the support of the distribution of points with positive labels, and $X^-$ denote that of the points with negative labels. Then let $H_S(f) := H[f^{-1}((0, \infty))]$ denote the \emph{support homology} of some function $f: X \to \{0,1\}$. Essentially $H_S(f)$ is homology of the set of $x$ such that $f(x) > 0$. For a binary classifier, $f$, $H_S(f)$ is roughly a characterization of how many 'holes' are in the positive decision region of $f$. We will sometimes use $\beta_n(f)$ to denote the $n$th Betti number of this support homology.  Finally let $\scriptf = \{f : X \to \{0,1\}\}$ be some family of binary classifiers on $X$.

\begin{theorem}[Homological Generalization] 
	 If $X = X^- \sqcup X^+$ and for all $f \in \scriptf$ with $H_S(f) \neq H(X^+)$, then for all $f \in \scriptf$ there exists $A \subset X^+$ so $f$ misclassifies every $x\in A.$ \label{thm:hom_princ_gen}
\end{theorem}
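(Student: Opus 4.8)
The plan is to argue by contraposition from the definition of perfect classification to matching support homology, and then to upgrade a set-theoretic mismatch into a positive-probability misclassification using the continuity of the CDF. First I would record the one tautological implication that drives everything: if a classifier $f \in \scriptf$ labels every positive point correctly and every negative point correctly, then its positive decision region satisfies $f^{-1}((0,\infty)) = X^+$ exactly, and hence $H_S(f) = H(X^+)$ trivially. Theorem \ref{thm:top_to_hom} is what makes the converse meaningful: since homeomorphic spaces share the same homology, the hypothesis $H_S(f) \neq H(X^+)$ certifies that $f^{-1}((0,\infty))$ is not even homeomorphic to $X^+$, so in particular $f^{-1}((0,\infty)) \neq X^+$ as sets. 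Thus no $f$ in the family realizes the decision region $X^+$.

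Next I would decompose the mismatch. Because $X = X^- \sqcup X^+$ is a disjoint union, the symmetric difference $f^{-1}((0,\infty)) \symdif X^+$ splits cleanly into the false negatives $A := X^+ \setminus f^{-1}((0,\infty)) \subseteq X^+$ and the false positives $B := f^{-1}((0,\infty)) \cap X^- \subseteq X^-$, and this symmetric difference is exactly the set on which $f$ disagrees with the true labelling. The previous step guarantees it is nonempty. The heart of the argument is then to show that the homological discrepancy cannot be confined to $B$ alone, i.e. that $A \neq \emptyset$: if $f$ captured every positive point (so $X^+ \subseteq f^{-1}((0,\infty))$), I would argue that adjoining the extra positively-labelled region $B$ can only merge or create features, and that to change a Betti number of $X^+$ in the way forced by $H_S(f) \neq H(X^+)$ one must either destroy a component or fill a hole that lives in $X^+$, which necessarily leaves some positive point outside the realized region. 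This localizes a genuine topological feature of $X^+$ --- a whole connected component or a cycle bounding a hole --- inside $A$.

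Finally I would convert nonemptiness into the generalization-relevant conclusion. Because $X^+$ is the support of the positive-label marginal and $F$ has a continuous CDF, the feature captured in $A$ is not a measure-zero artifact: a misclassified topological feature contains an open subset of the support, which carries strictly positive probability. Hence $f$ misclassifies a positive-measure set $A \subseteq X^+$, so every $f \in \scriptf$ incurs test error bounded away from zero, which is the desired statement.

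The main obstacle I anticipate is precisely the localization in the second paragraph: the clean tautology only yields $f^{-1}((0,\infty)) \neq X^+$ and a nonempty symmetric difference, whereas the claim insists the misclassified set sits inside $X^+$. Ruling out the degenerate case in which $f$ over-covers $X^+$ with false positives yet still registers the wrong homology requires relating the change in Betti numbers to the actual points removed from or added to $X^+$ --- in effect a Mayer--Vietoris or excision style bookkeeping of how $B$ alters $H$. I expect this to be the only step that uses more than the formal invariance recalled in Theorem \ref{thm:top_to_hom}, and it is where the support and continuity hypotheses must be invoked, so that the offending feature carries positive probability rather than being a topological nicety with no probabilistic weight.
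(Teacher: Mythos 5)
Your opening paragraph is, in essence, the paper's entire proof: the paper argues by contradiction that if no misclassified set exists, then (testing singletons) $f$ labels every point of $X$ correctly, so its positive decision region equals $X^+$ as a subspace, and Theorem \ref{thm:top_to_hom} then forces $H_S(f) = H(X^+)$, contradicting the hypothesis. Had you stopped there and concluded only that \emph{some} point of $X = X^- \sqcup X^+$ is misclassified, you would have reproduced the paper's argument. Indeed, the appendix restatement of Theorem \ref{thm:hom_princ_gen} quietly replaces ``$A \subset X^+$'' by ``$A \subset X$'', and the proof given there delivers exactly this weaker conclusion and nothing more.

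The localization step you flag as the heart of your argument --- showing the false-negative set $A = X^+ \setminus f^{-1}((0,\infty))$ is nonempty --- is a genuine gap, and it cannot be closed, because that stronger claim is false. Take $X^+$ a closed disk, $X^- = X \setminus X^+$, and a classifier whose positive decision region is $X^+ \sqcup B$ with $B$ a small ball lying strictly inside $X^-$. Then $\beta_0(f) = 2 \neq 1 = \beta_0(X^+)$, so $H_S(f) \neq H(X^+)$, yet every point of $X^+$ is classified correctly: the homological discrepancy is carried entirely by false positives, contradicting your claim that changing a Betti number ``necessarily leaves some positive point outside the realized region.'' No Mayer--Vietoris bookkeeping can repair this, since adjoining a disjoint piece to $X^+$ changes $H_0$ without touching $X^+$ at all. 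Your third paragraph's positive-measure upgrade is a further strengthening beyond anything the paper states or proves; it inherits the same gap, and additionally assumes the misclassified set contains an open subset of the support, which is not automatic. The honest fix is the one the paper itself adopts: weaken the conclusion to ``there exists a nonempty misclassified $A \subset X$,'' which your first paragraph already establishes.
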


Essentially, Theorem \ref{thm:hom_princ_gen} says that if an architecture (a family of models $\scriptf$) is incapable of producing a certain homological complexity, then for any model using that architecture there will always be a set $A$ of true datapoints on which the model will fail. Note that the above principle holds regardless of how $f \in \scriptf$ is attained, learned or otherwise. The principle implies that no matter how well some $f$ learns to correctly classify $\scriptd$ there will always be counter examples in the true data.

In the context of architecture selection, the foregoing minimality condition significantly reduces the size of the search space by eliminating smaller architectures which cannot even express the 'holes' (persistent homology) of the data $H(\scriptd)$. This allows us to return to our original question of finding suitably expressive and generalizable architectures but in the very computable language of homological complexity: Let $\scriptf_A$ the set of all neural networks with 'architecture' $A$, then

\begin{center}
\emph{Given a dataset $\scriptd$, for which architectures $A$ does there exist  a neural network $f \in \scriptf_A$ such that $H_S(f) = H(\scriptd)$?}
\end{center}

We will resurface a contemporary theoretical view on this question, and thereafter make the first steps towards an empirical characterization of the capacity of neural architectures in the view of topology.

\subsection{Theoretical Basis for Neural Homology}

Theoretically, the homological complexity of neural network can be framed in terms of\emph{ the sum of the number of holes} expressible by certain architectures. In particular, \citet{bianchini2014complexity} gives an analysis of how the maximum sum of Betti numbers grows as $\scriptf_A$ changes. The results show that the width and activation of a fully connected architecture effect its topological expressivity to varying polynomial and exponential degrees. 

What is unclear from this analysis is how these bounds describe expressivity or learnability in terms of \emph{individual} Betti numbers. From a theoretical perspective  \citet{bianchini2014complexity} stipulated that a characterization of individual homology groups require the solution of deeper unsolved problems in algebraic topology. However, for topological complexity to be effective in architecture selection, understanding \emph{each} Betti number is essential in that it grants direct inference of architectural properties from the persistent homology of the data. Therefore we turn to an empirical characterization.

\subsection{Empirical Characterization}

To understand how the homology of data determines expressive architectures we characterize the capacities of architectures with an increasing  number of  layers and hidden units to \emph{learn} and \emph{express} homology on datasets of varying homological complexity.

\begin{figure}[t]

\begin{center}
	\includegraphics[width=0.14\textwidth]{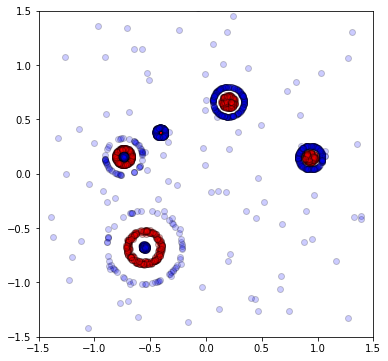}
	\includegraphics[width=0.14\textwidth]{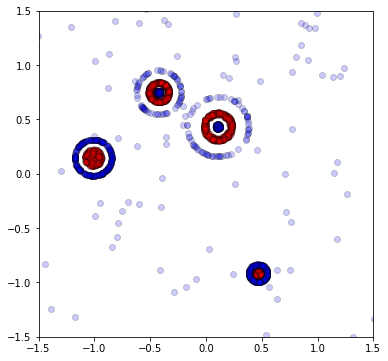}
	\includegraphics[width=0.14\textwidth]{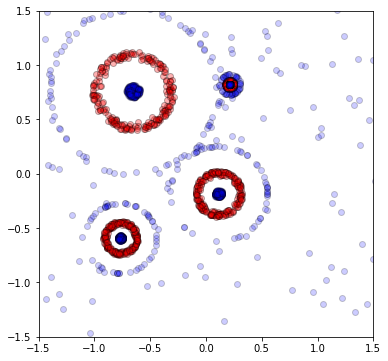}
	\includegraphics[width=0.14\textwidth]{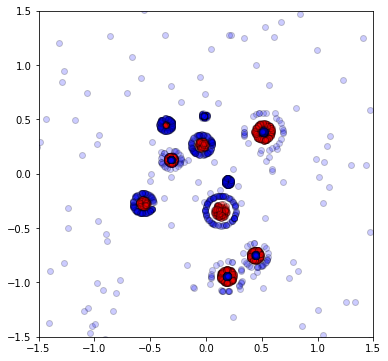}
	\includegraphics[width=0.14\textwidth]{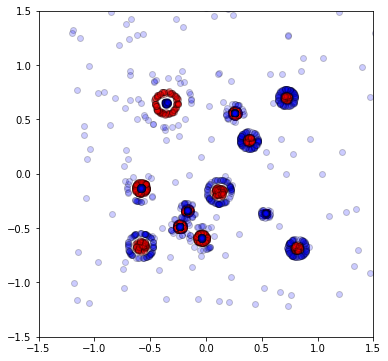}
	\includegraphics[width=0.14\textwidth]{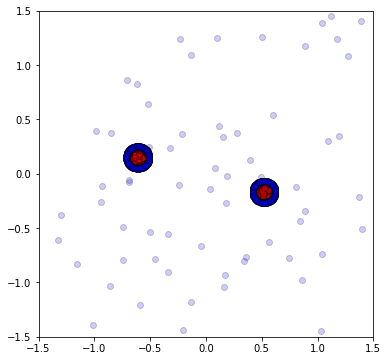}
	\caption{Scatter plots of $6$ different synthetic datasets of varying homological complexity.}
	\label{fig:datasets}
\end{center}
\end{figure}

\begin{figure*}
	\begin{center}
	\includegraphics[width=0.47\textwidth]{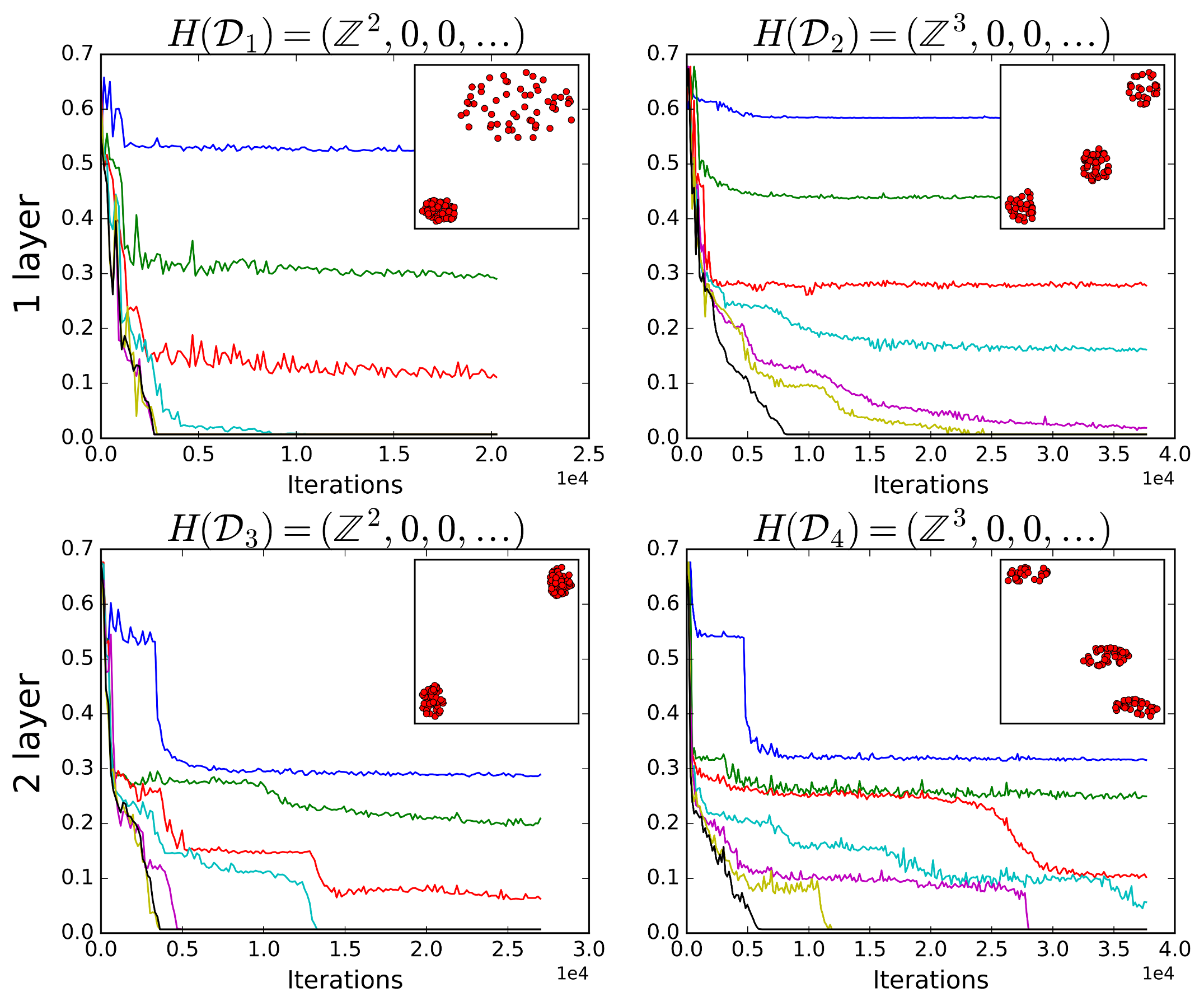}
	\includegraphics[width=0.47\textwidth]{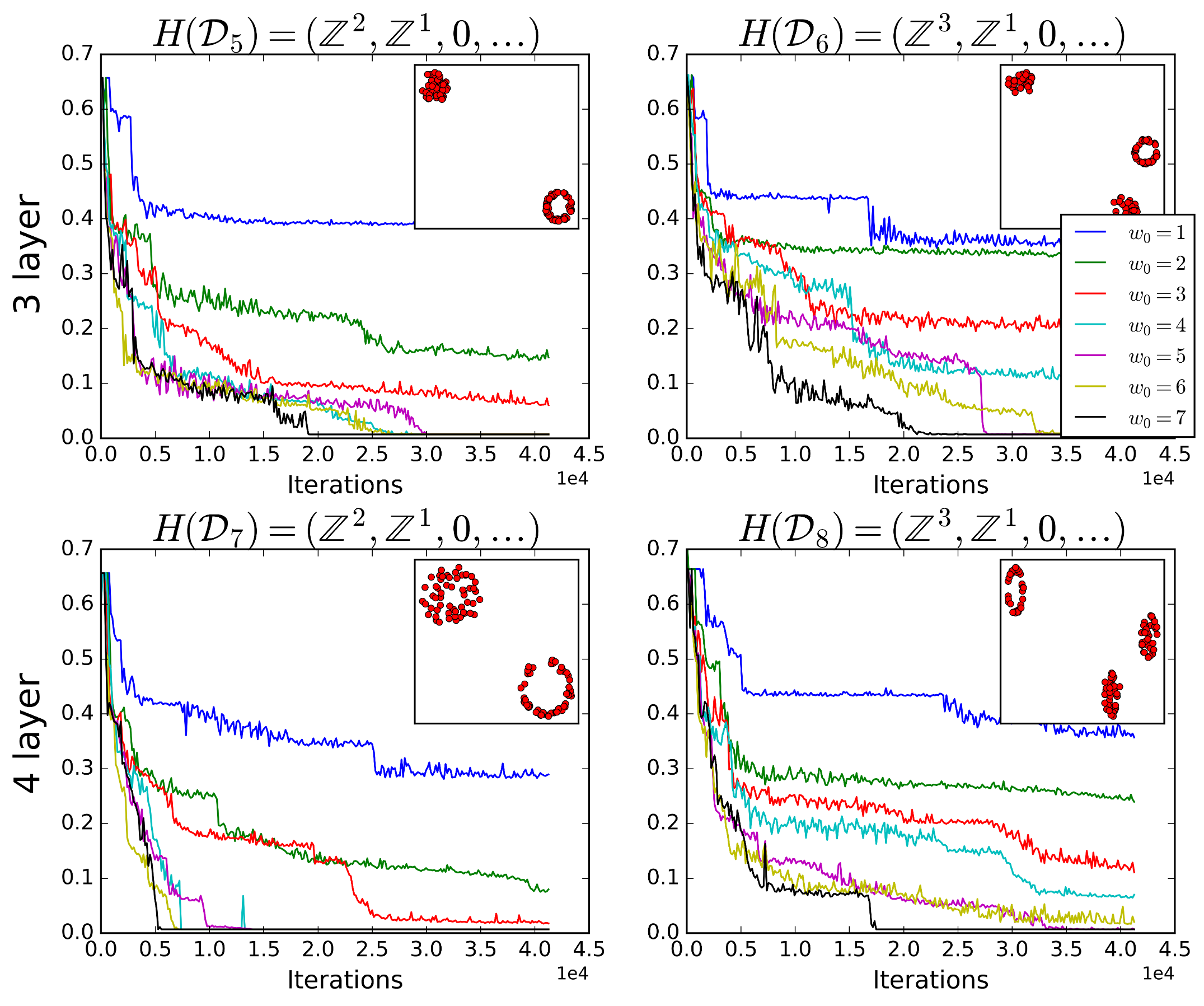}
	\end{center}

	\vspace{-10pt}
	\caption{Topological phase transitions in low dimensional neural networks as the homological complexity of the data increases. The upper right corner of each plot is a dataset on which the neural networks of increasing first layer hidden dimension are trained.  Each plot gives the minimum error for each architecture versus the number of minibatches seen.}
	\label{fig:phase}
\end{figure*}

  Restricting\footnote{Although we chose to study the low dimensional setting because it allows us to compute the persistent homology of the decision region directly, the convergence analysis extends to any number of dimensions.} our analysis to the case of $n=2$ inputs, we generate binary datasets of increasing homological complexity by sampling $ N = 5000$ points from mixtures of Unif$(\mathbb{S}^1)$ and Unif$(B^2)$, uniform random distributions on solid and empty circles with known support homologies. The homologies chosen range contiguously from $H(\scriptd) = \{\mathbb{Z}^1,0\}$ to $H(\scriptd) = \{\mathbb{Z}^{30}, \mathbb{Z}^{30}\}$ and each sampled distribution is geometrically balanced \emph{i.e.} each topological feature occupies the same order of magnitude. Additionally, margins were induced between the classes to aid learning. Examples are shown in Figure \ref{fig:datasets}.

 To characterize the difficulty of learning homologically complex data in terms of both depth and width, we consider fully connected architectures with ReLu activation functions \cite{nair2010rectified} of depth $\ell = \{1,\dots,6\}$ and width $h_l = \beta_0(\scriptd)$ when $1 \leq l \leq \ell$ and $h_l \in \{1, \dots, 500\}$ when $l =0$. We will denote individual architectures by the pair $(\ell, h_0)$. We vary the number of hidden units in the first layer, as they form a half-space basis for the decision region. The weights of each architecture are initialized to samples from a normal distribution $\scriptn(0,\frac{1}{\beta_0})$ with variance respecting the scale of each synthetic dataset. For each homology we take several datasets sampled from the foregoing procedure and optimize $100$ initializations of each architecture against the standard cross-entropy loss. To minimize the objective we use the Adam optimizer \cite{kingma2014adam} with a fixed learning rate of $0.01$ and an increasing batch size schedule \cite{smith2017don}.

 We compare each architectures average and best performance by measuring misclassification error over the course of training and homological expressivity at the end of training. The latter quantity, given by 
\begin{equation*}
	E_H^p(f, \scriptd) = \min\left\{\frac{\beta_p(f)}{\beta_p(\scriptd)},1\right\},
\end{equation*}
measures the capacity of a model to exhibit the true homology of the data. We compute the homology of individual decision regions by constructing a filtration on Heaviside step function of the difference of the outputs, yielding a persistence diagram with the exact homological components of the decision regions. The results are summarized in Figures \ref{fig:phase}-\ref{fig:homexistence}

 The resulting convergence analysis indicates that neural networks exhibit a statistically significant \emph{topological phase transition} during learning which depends directly on the homological complexity of the data. For any dataset and any random homeomorphism applied thereto, the best error of architectures with $\ell$ layers and $h$ hidden units (on the first layer)  is \emph{strictly} 
limited in magnitude and convergence time by $h_{phase}$. For example in Figure \ref{fig:phase}, $\ell = 3$ layer neural networks fail to converge for $h < h_{phase} = 4$ on datasets with homology $H(\scriptd_5) = (\mathbb{Z}^2, \mathbb{Z}^1, \dots)$. 

More generally, homological complexity directly effects the efficacy of optimization in neural networks. As shown in Figure \ref{fig:hom_converge}, taking any increasing progression of homologies against the average convergence time of a class of architectures yields an approximately monotonic relationship; in this case, convergence time at $h_{phase}$ increases with increasing $\beta_p(\scriptd)$, and convergence time at $h > h_{phase}$ decreases with fixed $\beta_p(\scriptd)$. A broader analysis for a varying number of layers is given in the appendix.

\begin{figure}[t!hb]
	\begin{center}
		\includegraphics[width=0.46\textwidth]{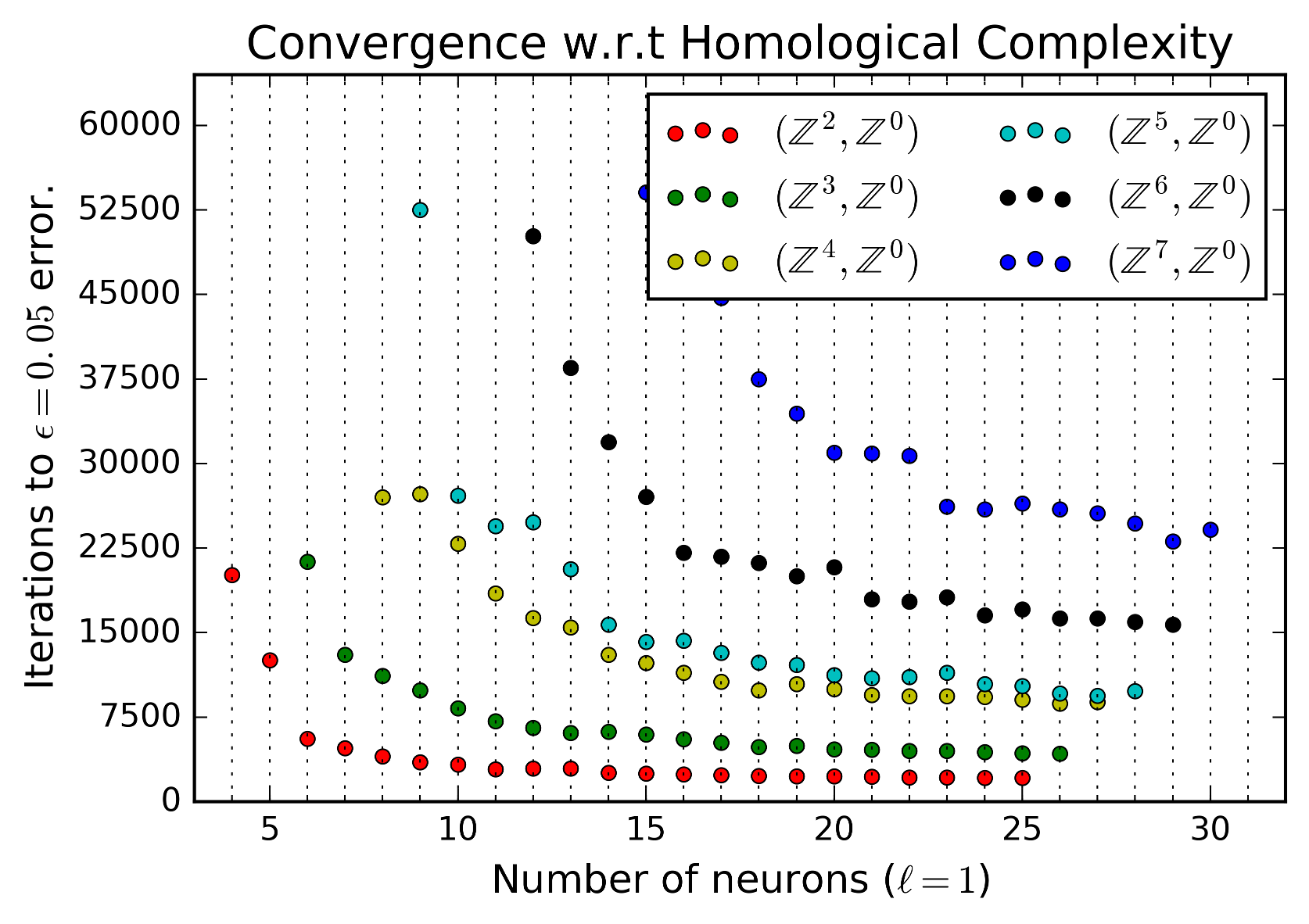}
	\end{center}
	\vspace{-10pt}
	\caption{A scatter plot of the number of iterations required for single-layer architectures of varying hidden dimension to converge to 5\% misclassification error. The colors of each point denote the topological complexity of the data on which the networks were trained. Note the emergence of monotonic bands. Multilayer plots given in the appendix look similar.}
	\label{fig:hom_converge}
\end{figure}

\begin{figure}[t!hb]
	\begin{center}
		\includegraphics[width=0.47\textwidth]{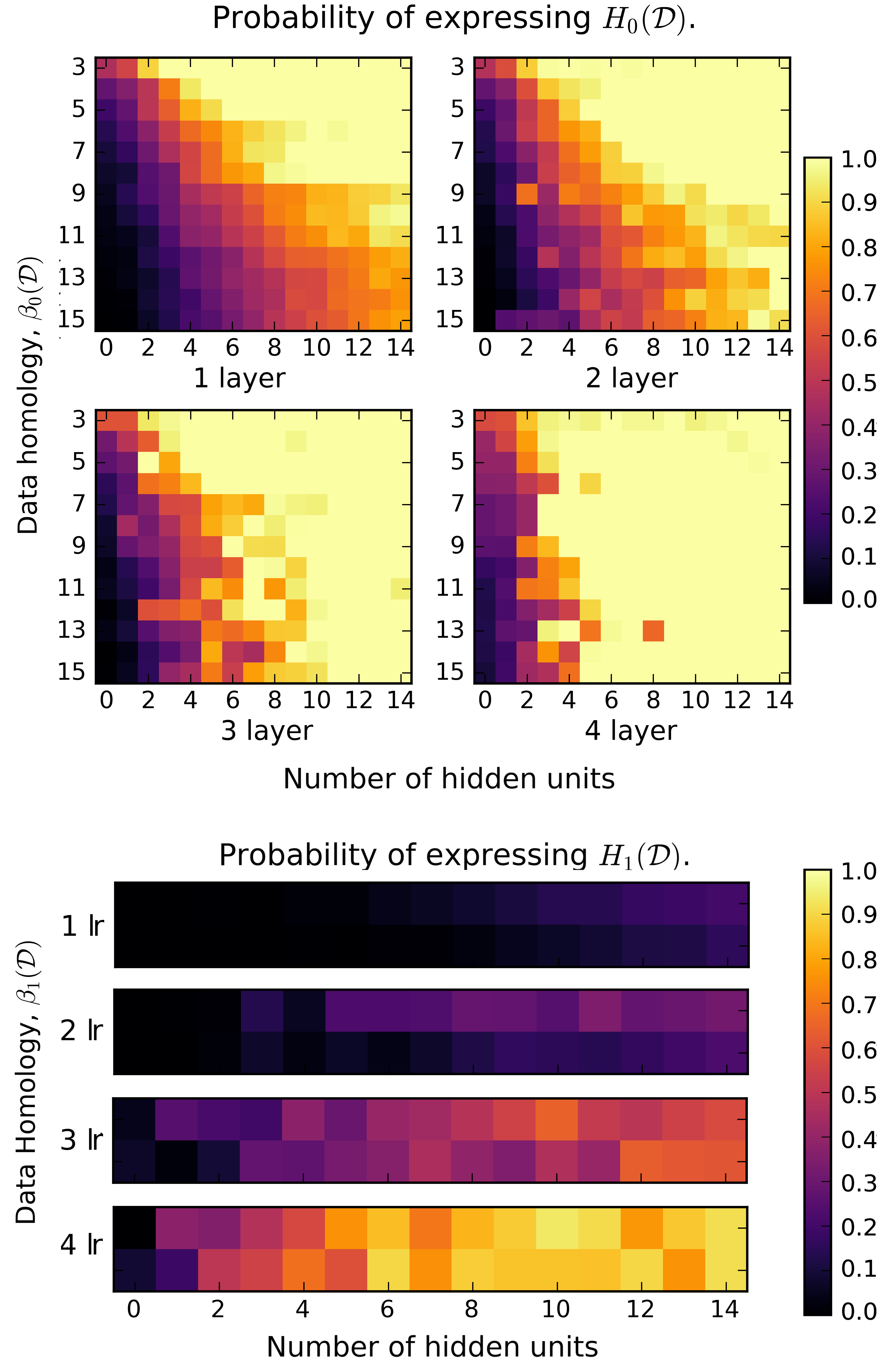}
	\end{center}
	\vspace{-10pt}
	\caption{A table of estimated probabilities of different neural architectures to express certain homological features of the data after training. Top: the probabilities of express homologies with increasing $\beta_0$ as a function of layers and neurons. Bottom: The probabilities of expressing $\beta_1 \in \{1,2\}$ as a function of layers and neurons.}
	\label{fig:homexistence}
\end{figure}

Returning to the initial question of architecture selection, the analysis of empirical estimation of $E_H^p(f, \scriptd)$  provides the first complete probabilistic picture of the homological expressivity of neural architectures. For architectures with $\ell \in \{1,2,3,4\}$ and $h_0 \in \{0,\dots,30\}$ Figure \ref{fig:homexistence} displays the estimated probability that $(\ell, h_0)$ expresses the homology of the decision region after training. Specifically, Figure \ref{fig:homexistence}(top) indicates that, for $\ell = 1$ hidden layer neural networks, $\max \beta_0(f))$ is clearly $\Omega(h_0)$. Examining $\ell =2,3,4$ in Figure \ref{fig:homexistence}(top), we conjecture\footnote{We note that $\beta_0(f)$ does not depend on $\beta_1(\scriptd)$ in the experiment for datasets with $\beta_1(\scriptd) > \beta_0(\scriptd)$ were not generated.} that as $\ell$ increases 
\begin{equation*}
\max_{f \in F_A} \beta_0(f) \in \Omega(h_0^{\ell}))   ,
\end{equation*} by application of Theorem 3.1 to the expressivity estimates. Therefore
 \begin{equation}
 h_{phase} \geq C\sqrt[\ell]{\beta_0(\scriptd)}.\label{eq:bound}
 \end{equation} Likewise, in Figure \ref{fig:homexistence}(bottom), each horizontal matrix gives the probability of expressing $\beta_1(\scriptd) \in \{1,2\}$ for each layer. This result indicates that higher order homology is extremely difficult to learn in the single-layer case, and as $\ell \to \infty$, $\max_{f \in f_A} \beta_1(f) \to n = 2$, the input dimension. 

The importance of the foregoing empirical characterization for architecture selection is two-fold. First, by analyzing the effects of homological complexity on the optimization of different architectures, we were able to conjecture probabilistic bounds on the \emph{learnable} homological capacity of neural networks. Thus, predictions of minimal architectures using those bounds are sufficient enough to {learn} data homology up to homeomorphism.  Second, the analysis of individual Betti numbers enables data-first architecture selection using persistent homology.

\section{Topological Architecture Selection}

\begin{figure*}
	\includegraphics[width=0.49\textwidth]{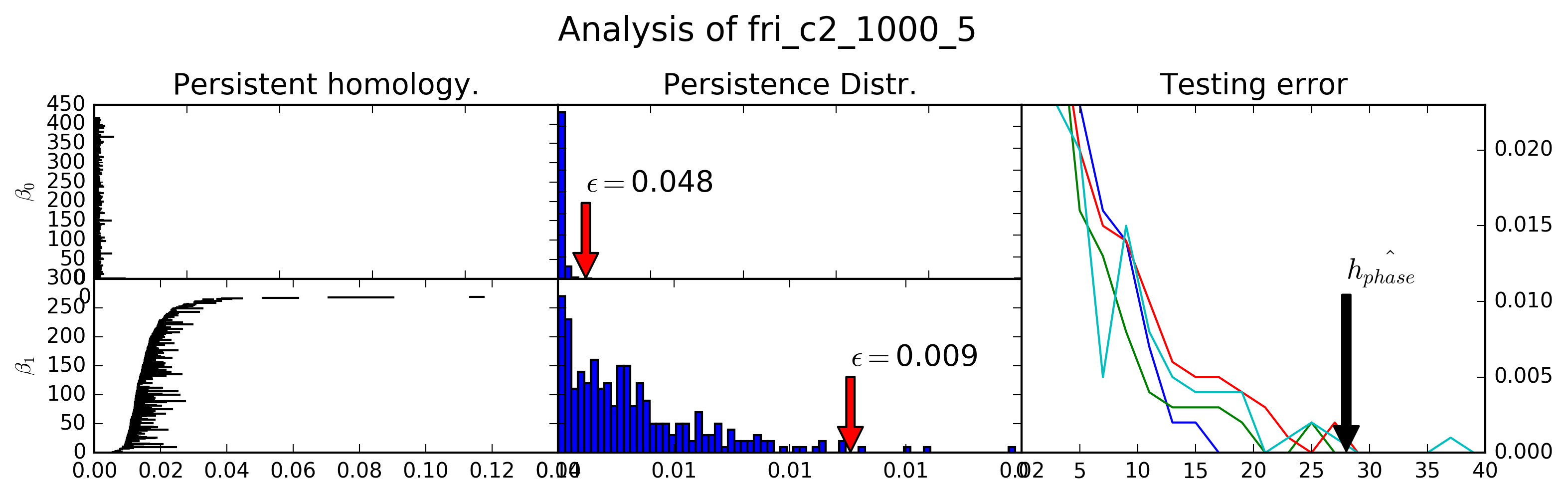}
	\includegraphics[width=0.49\textwidth]{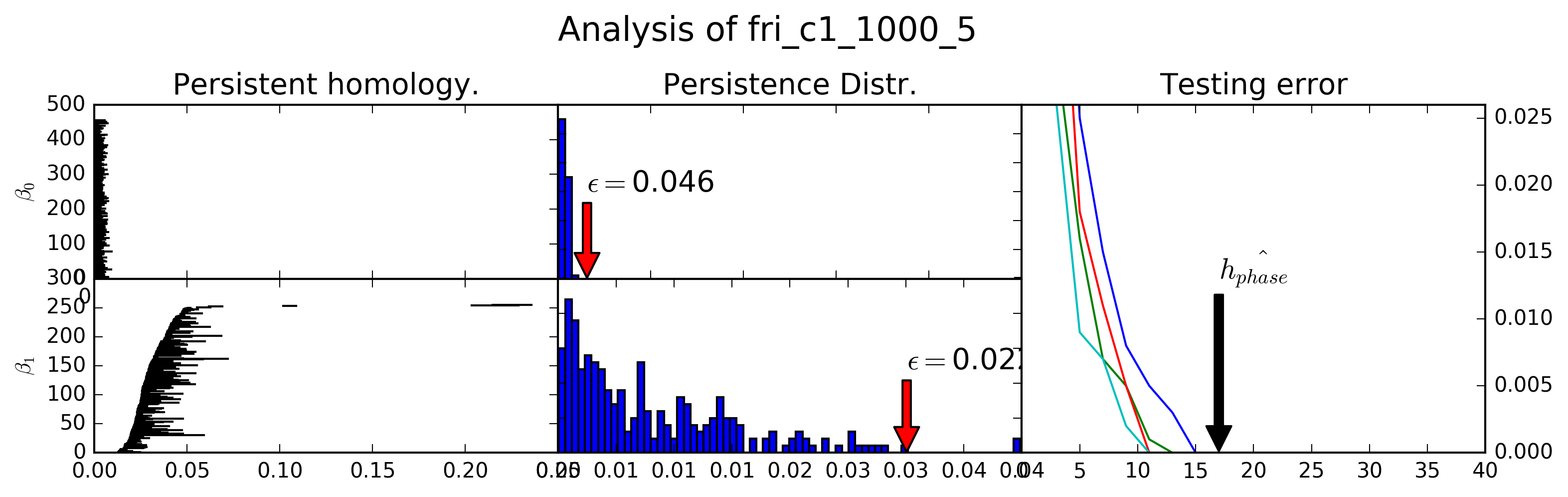}
	\includegraphics[width=0.49\textwidth]{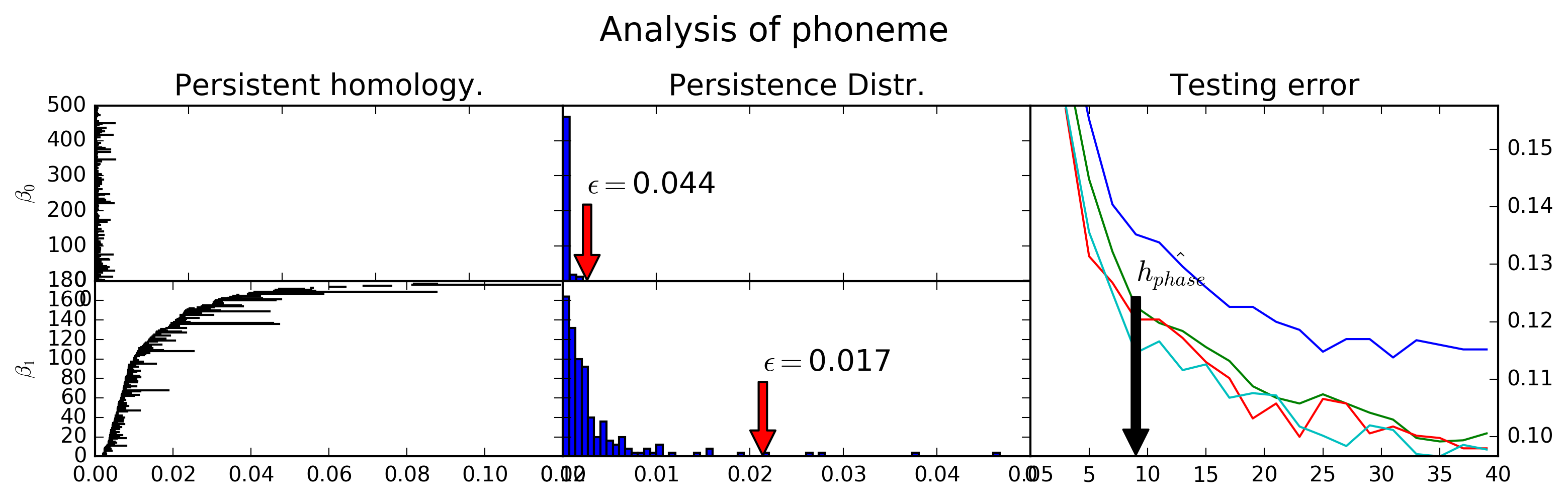}
	\includegraphics[width=0.49\textwidth]{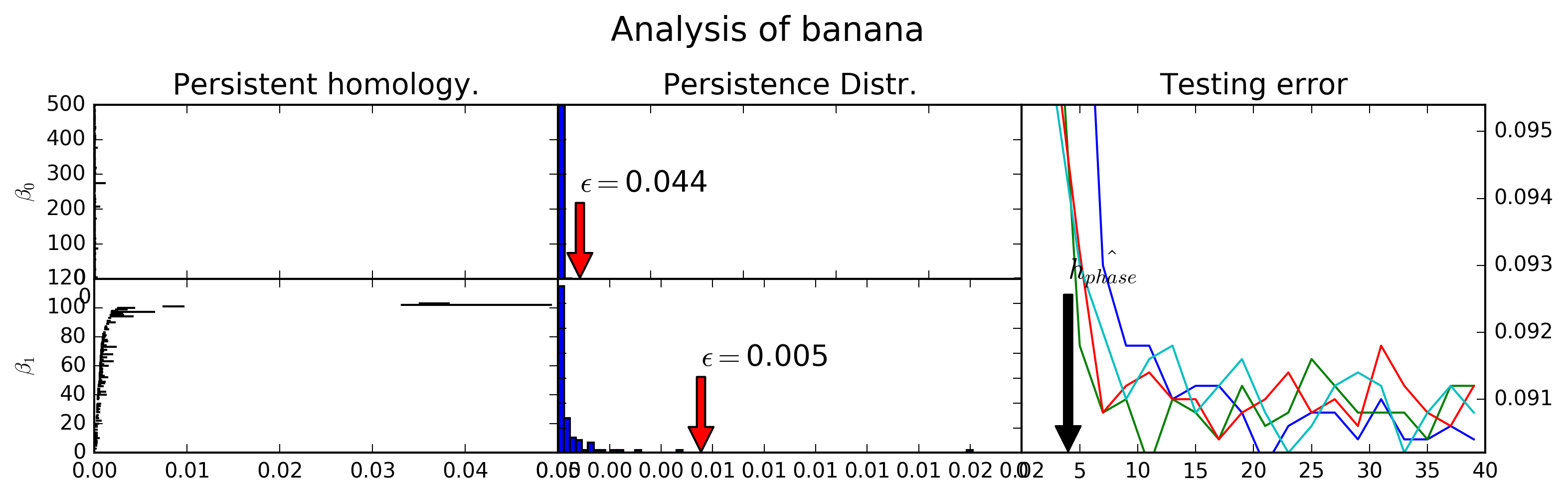}
	\label{fig:top_arch_sel}
	\caption{Topological architecture selection applied to four different datasets. The persistent homology, the histogram of topological lifespans, and the predicted $\hat{h}_{phase}$ are indicated. For each test error plot, the best performance of one (blue), two (green), three (red), and four (light blue) layer neural networks are given in terms of the number of hidden units on the first layer.}
\end{figure*}

We have thus far demonstrated the discriminatory power of homological complexity in determining the expressivity of architectures. However, for homological complexity to have any practical use in architecture selection, it must be computable on real data, and more generally real data must have non-trivial homology. In the following section we present a method for relating persistent homology of any dataset to a minimally expressive architecture predicted by the foregoing empirical characterization, and then we experimentally validate our method on several datasets.

Topological architecture selection is comprised of three steps: given a dataset, compute its persistent homology; determine an appropriate scale at which to accept topological features as pertinent to the learning problem; and infer a lower-bound on $h_{phase}$ from the topological features at or above the decided scale. 

The extraction of static homology from persistence homology, while aesthetically valid \cite{carlsson2008local}, is ill-posed in many cases. For the purposes of architecture selection, however, exact reconstruction of the original homology is not necessary. Given a persistence diagram, $D_p$ containing the births $b_i$ and deaths $d_i$ of features in $H_p(\scriptd)$, let~$\epsilon$ be given and consider all $\alpha_i = (b_i, d_i)$ such that when $|d_i - b_i| > \epsilon$ we assume $\alpha$ to be a topological component of the real space. Then the resulting homologies form a filtration $H_p^{\epsilon_1}(\scriptd) \subset H_p^{\epsilon_2}$ for $\epsilon_1, \epsilon_2 \in \mathbb{R}^+$. If $\epsilon$ is chosen such that certain topologically noisy features \cite{fasy2014confidence} are included in the estimate of $h_{phase}$, then at worst the architecture is overparameterized, but still learns. If $\epsilon$ is such that the estimated $h_{phase}$ is underrepresentitive of the topological features of a space, then at worst, the architecture is underparameterized but potentially close to $h_{phase}.$  As either case yields the solution or a plausibly useful seed for other algorithm selection algorithms \cite{45826,feurer2015initializing}, we adopt the this static instantiation of persistence homology.

In order to select an architecture $(\ell, h_0)$ lower-bounding $h_{phase}$, we restrict the analysis to the case of $\ell =1$ and regress a multilinear model training on pairs
\begin{equation*}
	(b_0, b_1) \mapsto \arg \min_m \ E_H^p(f_{m}, \scriptd)\geq 1, \beta_*(\scriptd) = (b_0, b_1)
\end{equation*}
over all $m$ hidden unit single layer neural networks $f_m$  and synthetic datasets of known homology $\scriptd$ from our previous experiments. The resultant discretization of the model gives a lower-bound estimate after applying the bound from \eqref{eq:bound}:
\begin{equation}
\label{eq:estimate}
 \hat{h}_{phase}(\beta_0, \beta_1) \geq \beta_1 C\sqrt[\ell]{(\beta_0)}  + 2
 \end{equation}
  Estimating this lower-bound is at the core of neural homology theory and is the subject of substantial future theoretical and empirical work.

\subsection{Results}

In order to validate the approach, we applied topological architecture selection to several binary datasets from the OpenML dataset repository \cite{OpenML2013}: \texttt{fri\_c}, 
\texttt{balance-scale}, \texttt{banana}, \texttt{phoneme}, and \texttt{delta\_ailerons}.
 We compute persistent homology of each of the two labeled classes therein and accept topological features with lifespans greater than two standard deviations from the mean for each homological dimension. We then estimated a lowerbound on $h_{phase}$ for single hidden layer neural networks using \ref{eq:estimate}. Finally we trained $100$ neural networks for each architecture $(\ell, h_0)$ where $h_0 \in\{1,3,\dots,99\}$ and $\ell \in \{1,\dots,4\}$. During training we record the minimum average error for each $h_0$ and compare this with the estimate $\hat{h}_{phase}.$ The results are summarized in \ref{fig:top_arch_sel}.

These preliminary findings indicate that the empirically derived estimate of $h_{phase}$ provides a strong starting point for architecture selection, as the minimum error at $\hat{h}_{phase}$ is near zero in every training instance. Although the estimate is given only in terms of $0$ and $1$ dimensional homology of the data, it still performed well for higher dimensional datasets such as \texttt{phoneme} and \texttt{fri\_c*}. In failure cases, choice of $\epsilon$ greatly affected the predicted $h_{phase}$, and thus it is imperative that more adaptive topological selection schemes be investigated.

While our analysis and characterization is given for the the decision regions of individual classes in a dataset, it is plausible that the true decision boundary is topologically simple despite the complexity of the classes. Although we did not directly characterize neural network by the topology of their decision boundaries,  recent work by \citet{varshney2015persistent} provides an exact method for computing the persistent homology of a decision boundary between any number of classes in a dataset. It is the subject of future work to provide a statistical foundation for \cite{varshney2015persistent} and then reanalyze the homological capacity of neural networks in this context.

\subsection{Homological Complexity of Real Data}

In addition to testing topological measures of complexity in the setting of architecture selection, we verify that common machine learning benchmark datasets have non-trivial homology and the computation thereof is tractable.

\textbf{CIFAR-10.} We compute the persistent homology of several classes of CIFAR-10  using the Python library Dionysus. Current algorithms for persistent homology do not deal well with high dimensional data, so we embed the entire dataset in $\mathbb{R}^3$ using local linear embedding (LLE; \cite{saul2000introduction})  with $K = 120$ neighbors. We note that an embedding of any dataset to a lower dimensional subspace with small enough error roughly preserves the static homology of the support of the data distribution distribution by Theorem \ref{thm:top_to_hom}. After embedding the dataset, we take a sample of $1000$ points from example class 'car' and build a persistent filtration by constructing a Vietoris-Rips complex on the data. The resulting complex has $20833750$ simplices and took $4.3$ min. to generate. Finally, computation of the persistence diagram shown in Figure \ref{fig:dionysus_persis} took $8.4$ min. locked to a single thread on a Intel Core i7 processor. The one-time cost of computing persistent homology could easily augment any neural architecture search.

 Although we only give an analysis of dimension $2$ topological features--and there is certainly higher dimensional homological information in CIFAR-10--the persistence barcode diagram is rich with different components in both $H_0(\scriptd)$ and $H_1(\scriptd).$ Intuitively, CIFAR contains pictures of cars rotated across a range of different orientations and this is exhibited in the homology. In particular, several holes are born and die in the range $\epsilon \in [0.15, 0.375]$ and one large loop from $\epsilon \in [0.625, 0.82]$. 
\begin{figure}[t]
	\begin{center}
		\includegraphics[width=0.47\textwidth]{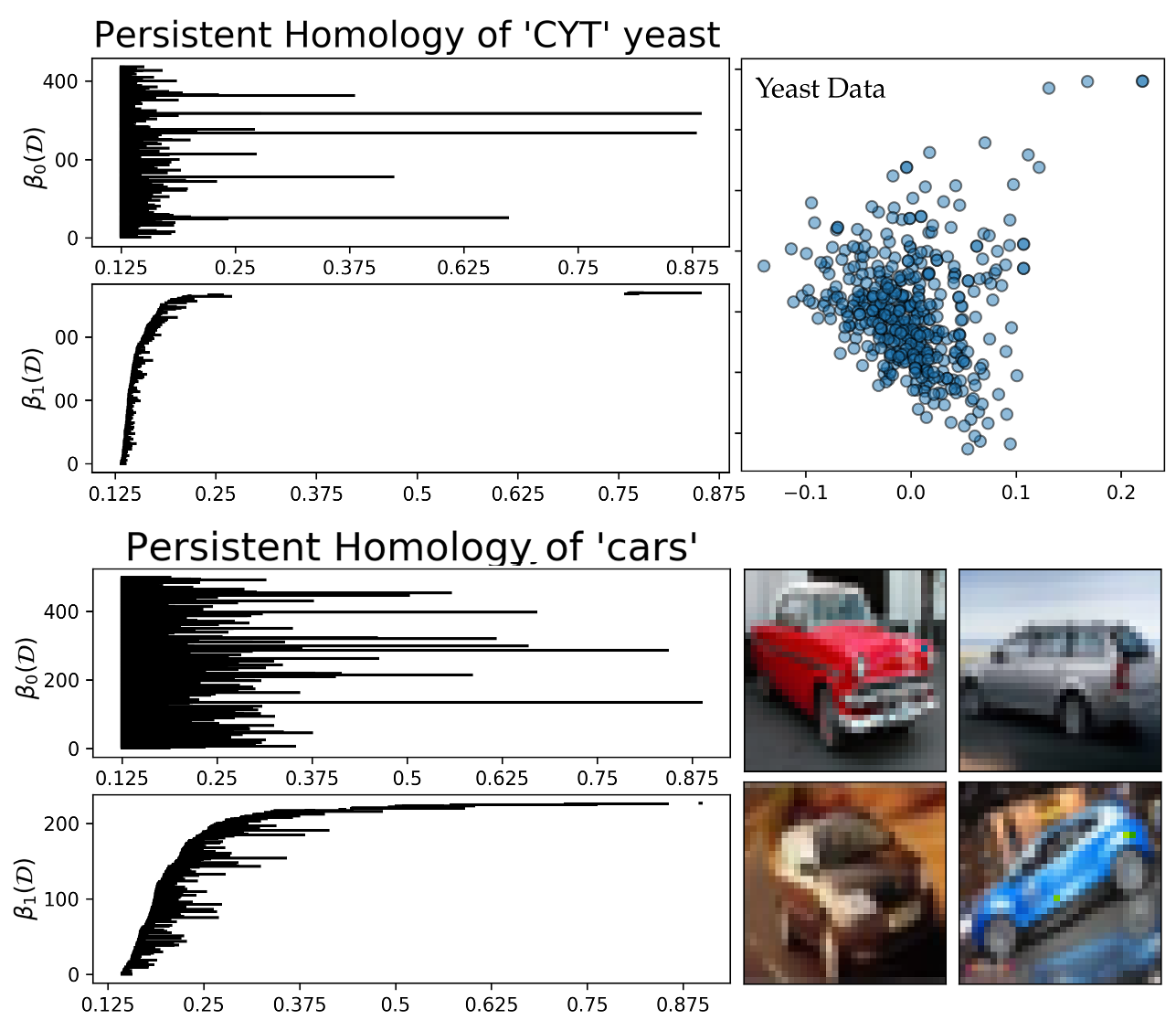}
		\caption{The persistent homology barcodes of classes in the CIFAR-10 Datasets; The barcode for the dimensions $0$ and $1$ for the 'cars' class along side different samples. Note how different orientations are shown.}\label{fig:dionysus_persis}
	\end{center}
\end{figure}

\textbf{UCI Datasets.} We further compute the homology of three low dimensional UCI datasets and attempt to assert the of non-trivial , $h_{phase}$. Specifically, we compute the persistent homology of the majority classes in the Yeast Protein Localization Sites, UCI Ecoli Protein Localization Sites, and HTRU2 datasets. For these datasets no dimensionality reduction was used. In Figure \ref{fig:dionysus_persis}(left), the persistence barcode exhibits two  separate significant loops (holes) at $\epsilon \in [0.19, 0.31]$ and $\epsilon \in [0.76,0.85]$, as well as two major connected components in $\beta_0(\scriptd).$ The Other persistence diagrams are relegated to the appendix.

\section{Related Work}

We will place this work in the context of  deep learning theory as it relates to expressivity. Since the seminal work of \citet{cybenko1989approximation} which established standard universal approximation results for neural networks, many researchers have attempted to understand the expressivity of certain neural architectures. \citet{pascanu2013number} and \citet{mackay2003information} provided the first analysis relating the depth and width of architectures to the complexity of the sublevel sets they can express. Motivated therefrom, \citet{bianchini2014complexity} expressed this theme in the language of Pfefferian functions, thereby bounding the sum of Betti numbers expressed by sublevel sets. Finally \citet{guss2016deep} gave an account of how topological assumptions on the input data lead to optimally expressive architectures. In parallel, \citet{eldan2016power} presented the first analytical minimality result in expressivity theory; that is, the authors show that there are simple functions that cannot be expressed by two layer neural networks without exponential dependence on input dimension.  This work spurred the work of \citet{poole2016exponential}, \citet{raghu2016expressive} which reframed expressivity in a differential geometric lens.

Our work presents the first method to derive expressivity results empirically. Our topological viewpoint sits dually with its differential geometric counterpart, and in conjunction with the work of \cite{poole2016exponential} and \cite{bianchini2014complexity}. This duality implies that when topological expression is not possible, exponential differential expressivity allows networks to bypass homological constraints at the cost of adversarial sets. Furthermore, our work opens a practical connection between the foregoing theory on neural expressivity and architecture selection, with the potential to substantially improve neural architecture search \cite{45826} by directly computing the capacities of different architectures.

\section{Conclusion}

Architectural power is closely related to the algebraic topology of decision regions. In this work we distilled neural network expressivity into an empirical question of the generalization capabilities of architectures with respect to the homological complexity of learning problems. This view allowed us to provide an empirical method for developing tighter characterizations on the the capacity of different architectures in addition to a principled approach to guiding architecture selection by computation of persistent homology on real data.

There are several potential avenues of future research in using homological complexity to better understand neural architectures. First, a full characterization of neural networks with convolutional linearities and state-of-the-art topologies is a crucial next step. Our empirical results suggest that there are exact formulas describing the of power of neural networks to express decision boundaries with certain properties. Future theoretical work in determining these forms would significantly increase the efficiency and power of neural architecture search, constraining the search space by the persistent homology of the data. Additionally, we intend on studying how the topological complexity of data changes as it is propagated through deeper architectures.

\section{Acknowledgements}

We thank Greg Yang, Larry Wasserman, Guy Wilson, Dawn Song and Peter Bartlett for useful discussions and insights. 

%

\bibliographystyle{arxiv2018}
\bibliography{ntop}

\newpage
\appendix

\section{Proofs, Conjectures, and Formal Definitions}

\subsection{Homology}

Homology is naturally described using the language of category theory. Let $Top^2$ denote the category of topological spaces and $Ab$ the category of abelian groups.

\begin{definition}[Homology Theory, \cite{bredon2013topology}]
	A homology theory on the on $Top^2$ is a function $H: Top^2 \to Ab$ assigning to each pair $(X,A)$ of spaces a graded (abelian) group $\{H_p(X,A)\}$, and to each map $f: (X, A) \to (Y,B)$, homomorphisms $f_*: H_p(X,A) \to H_p(Y,B)$, together with a natural transformation of functors $\partial_*: H_p(X,A) \to H_{p-1}(X,A)$, called the connecting homomorphism (where we use $H_*(A)$ to denote $H_*(A, \emptyset)$) such that the following five axioms are satisfied.
	\begin{enumerate}
		\item If $f \simeq g: (X,A) \to (Y,B)$ then $f_* = g_*: H_*(X,A) \to H_*(Y,B)$.
		\item For the inclusions $i: A \to X$ and $j: X \to (X,A)$ the sequence sequence of inclusions and connecting homomorphisms are exact.
		\item Given the pair $(X,A)$ and an open  set $U \subset X$ such that $cl(U) \subset int(A)$ then the inclusion $k:(X - U, A -U) \to (X,A)$ induces an isomorphism $k_*: H_*(X -U, A- U) \to H_*(X,A)$
		\item For a one point space $P, H_i(P) = 0$ for all $i \neq 0$.

		\item For a topological sum $X = +_\alpha X_\alpha$ the homomorphism
		\begin{equation*}
			\bigoplus(i_\alpha)_*: \bigoplus H_n(X_\alpha) \to H_n(X)
		\end{equation*}
		is an isomorphism, where $i_\alpha: X_\alpha \to X$ is the inclusion.
		
	\end{enumerate}
\end{definition}
For related definitions and requisite notions we refer the reader to \cite{bredon2013topology}.

\subsection{Proof of Theorem \ref{thm:hom_princ_gen}}
	\begin{theorem} Let $X$ be a topological space and $X^+$ be some open subspace. If $\scriptf \subset 2^X$ such that $f\in \scriptf$ implies $H_S(f) \neq H(X^+)$, then for all $f \in \scriptf$ there exists $A \subset X$ so that $f(A \cap X^+) = \{0\}$ and $f(A \cap (X \setminus X^+)) = \{1\}.$
	\end{theorem}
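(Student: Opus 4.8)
The plan is to prove the contrapositive of the only nontrivial content — that a classifier with the ``wrong'' support homology must misclassify a nonempty set of points — and to exhibit that set explicitly as a symmetric difference. First I would pass from the functional picture to the set picture: identifying each $f \in \scriptf \subseteq 2^X$ with its positive region $S_f := f^{-1}((0,\infty))$, and noting that since $f$ is $\{0,1\}$-valued we have $S_f = f^{-1}(1)$ and $H_S(f) = H(S_f)$ by definition. The natural candidate for the misclassification set is then the symmetric difference $A := S_f \triangle X^+ = (S_f \setminus X^+) \cup (X^+ \setminus S_f)$.

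Next I would verify the two labeling conditions directly from the construction of $A$. On $A \cap X^+ = X^+ \setminus S_f$ every point lies outside the positive region, so $f \equiv 0$ there and hence $f(A \cap X^+) \subseteq \{0\}$; dually, on $A \cap (X \setminus X^+) = S_f \setminus X^+$ every point lies inside the positive region, so $f \equiv 1$ there and $f(A \cap (X \setminus X^+)) \subseteq \{1\}$. The literal set equalities in the statement should be read as these inclusions, which sharpen to equalities precisely on whichever of the two pieces is nonempty; flagging this is the only notational wrinkle.

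The crux is to show $A \neq \emptyset$, since otherwise the conclusion is vacuous. I would argue by contraposition: if $A = \emptyset$ then $S_f \triangle X^+ = \emptyset$, i.e.\ $S_f = X^+$ as subsets of $X$. Equal sets are the same topological space, so their homologies coincide — this is the degenerate case of Theorem \ref{thm:top_to_hom} applied to the identity homeomorphism — giving $H_S(f) = H(S_f) = H(X^+)$, which contradicts the standing hypothesis $H_S(f) \neq H(X^+)$. Hence $A$ is nonempty and collects genuine misclassified points of both permitted types.

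I expect essentially no analytic obstacle: once the support-homology hypothesis is read correctly, the whole argument reduces to the observation that differing homology forces $S_f \neq X^+$, with the symmetric difference gathering exactly the errors. The only care required is in interpreting the set-valued notation $f(\cdot)=\{0\},\{1\}$, and in recognizing that no homeomorphism invariance beyond the trivial ``equal sets have equal homology'' is actually needed — invoking Theorem \ref{thm:top_to_hom} is a convenience rather than a necessity.
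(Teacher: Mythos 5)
Your proof is correct, and its engine is ultimately the same as the paper's: unequal homology forces $f^{-1}(1) \neq X^+$ (because equal sets trivially carry equal homology), and the points witnessing that inequality are exactly the misclassified ones. The packaging, however, genuinely differs. The paper argues by contradiction: it negates the conclusion, applies the negation to singleton sets $A = \{x\}$ to conclude that $f$ classifies every point of $X$ correctly, then reassembles $supp(f) = X^+$ from the subspace topology and invokes Theorem \ref{thm:top_to_hom} to obtain $H_S(f) = H(X^+)$, a contradiction. You instead exhibit the witness set explicitly as the symmetric difference $A = S_f \triangle X^+$, verify both labeling conditions by pure set manipulation, and confine the homological hypothesis to the single step $A = \emptyset \Rightarrow S_f = X^+ \Rightarrow H_S(f) = H(X^+)$. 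What your route buys: a shorter, constructive argument that sidesteps the paper's quantifier gymnastics (its stated negation of the conclusion is not literally the negation, though the intent is recoverable), and an honest accounting of what the hypothesis actually delivers. Your flagged wrinkle is a real defect in the statement that the paper's proof silently ignores: differing homology only guarantees that \emph{one} side of the symmetric difference is nonempty, so if, say, $S_f \subsetneq X^+$, then $A \cap (X \setminus X^+) = \emptyset$ for your $A$ — and indeed for every admissible $A$, since no point outside $X^+$ has $f = 1$ — making the literal equality $f(A \cap (X \setminus X^+)) = \{1\}$ unachievable. The conclusion must therefore be read as the inclusions you state (equivalently, as the main-text phrasing that $f$ misclassifies every point of a nonempty $A$), and your proof is exactly a correct proof of that corrected statement.
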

	\begin{proof}
		Suppose the for the sake of contraiction that for all $f\in \scriptf$, $H_S(f) \neq H(X^+)$ and yet there exists an $f$ such that for all $A \subset X$, there exists an $x\in A$ such that $f(x) = 1$. Then take $\scripta = \{x\}_{x \in X}$, and note that $f$ maps each singleton into its proper partition on $X$. We have that for any open subset of $V \subset X^+$, $f(V) = \{1\}$, and for any closed subset $W \subset X \setminus X^+$, $f(W) = \{0\}$. Therefore $X^+ = \bigcup_{A \in \tau_{X^+ \cap X}} A  \subset supp(f)$ as the subspace topology $\tau_{X^+ \cap X} = \tau_{X^+} \cap \tau_{X}$ where $\tau_{X^+} = \{A \in \tau_X\ |\ A \subset X^+\}$ and $\tau_X$ denotes the topology of $X$. Likewise, $int(X^-) \subset X \setminus supp(F)$ under the same logic.  Therefore $supp(f)$ has the exact same topology as $X^+$ and so by Theorem \ref{thm:top_to_hom} $H(X^+) = H(supp(f))$ but this is a contradiction.  This completes the proof.
	\end{proof}
\section{Additional Figures}
\begin{center}
	See next page.
\end{center}

\newpage
\begin{figure*}[!htb]
	\begin{center}
		\includegraphics[width=0.49\textwidth]{figures/convergence1.png}
		\includegraphics[width=0.49\textwidth]{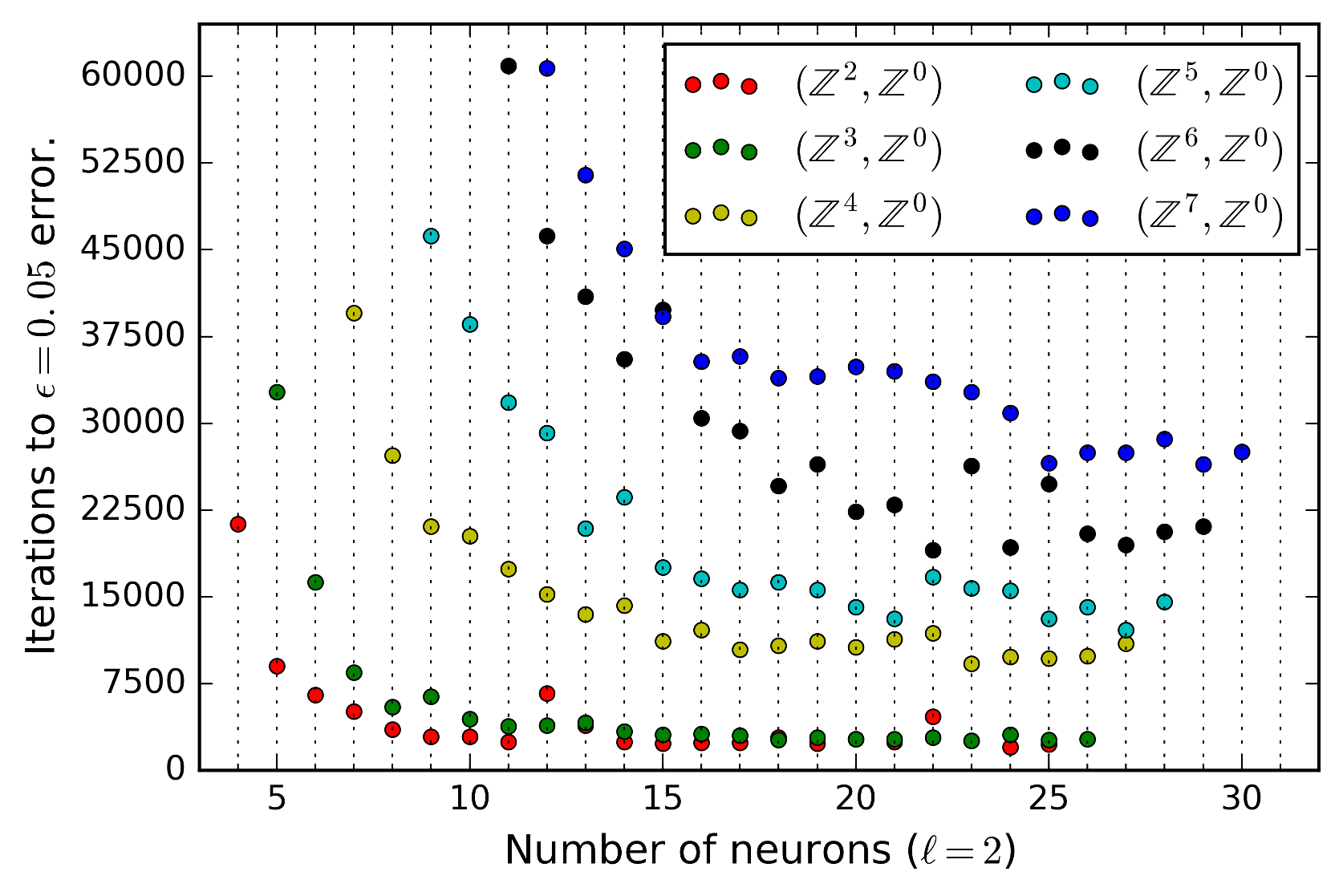}
		\includegraphics[width=0.49\textwidth]{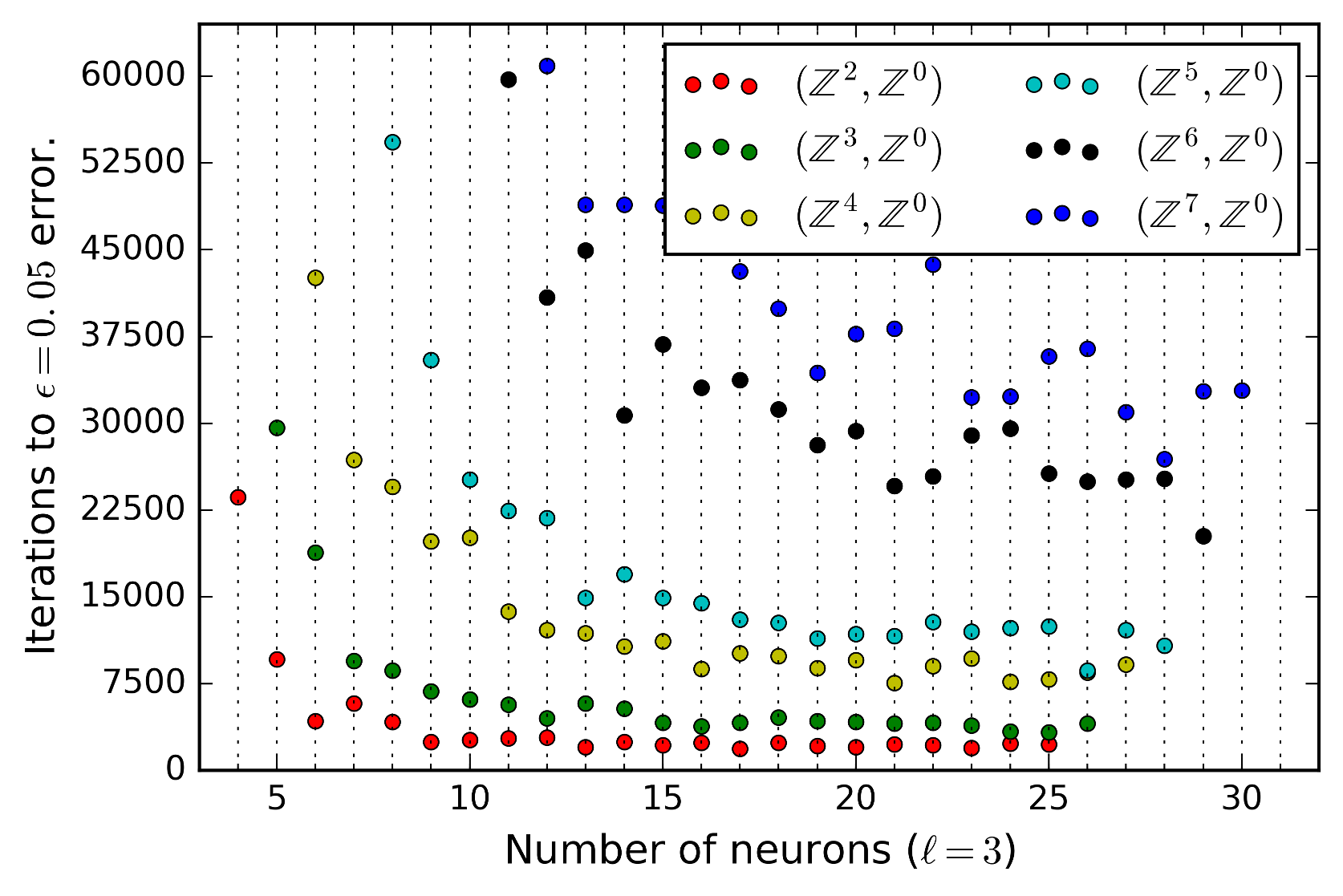}
		\includegraphics[width=0.49\textwidth]{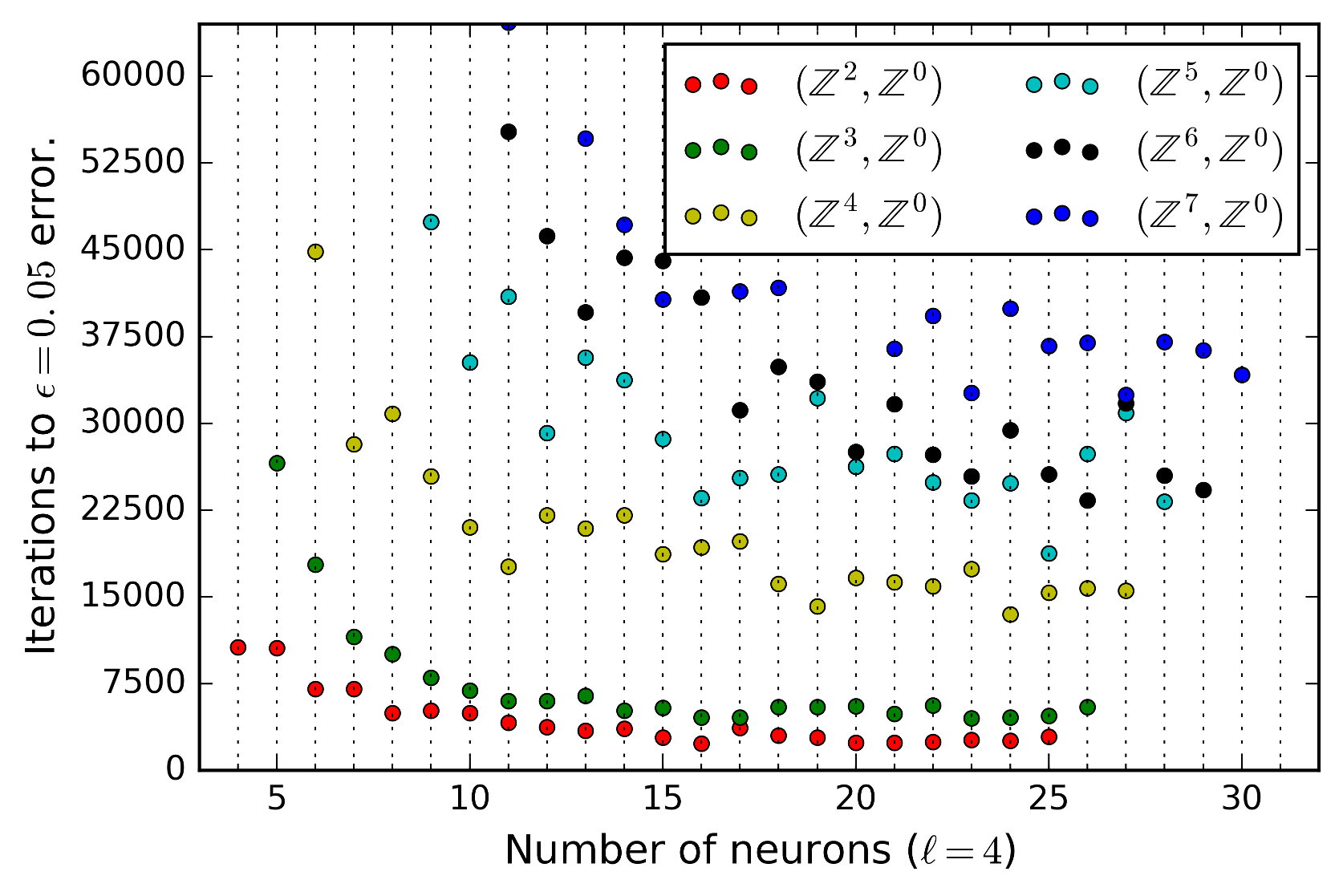}
	\end{center}
	\caption{A scatter plot of the number of iterations required for architectures of varying hidden dimension and number of layers to converge to 5\% misclassification error. The colors of each point denote the topological complexity of the data on which the networks were trained. Note the emergence of convergence bands.}
\end{figure*}

\begin{figure*}
	\begin{center}
		\includegraphics[width=0.99\textwidth]{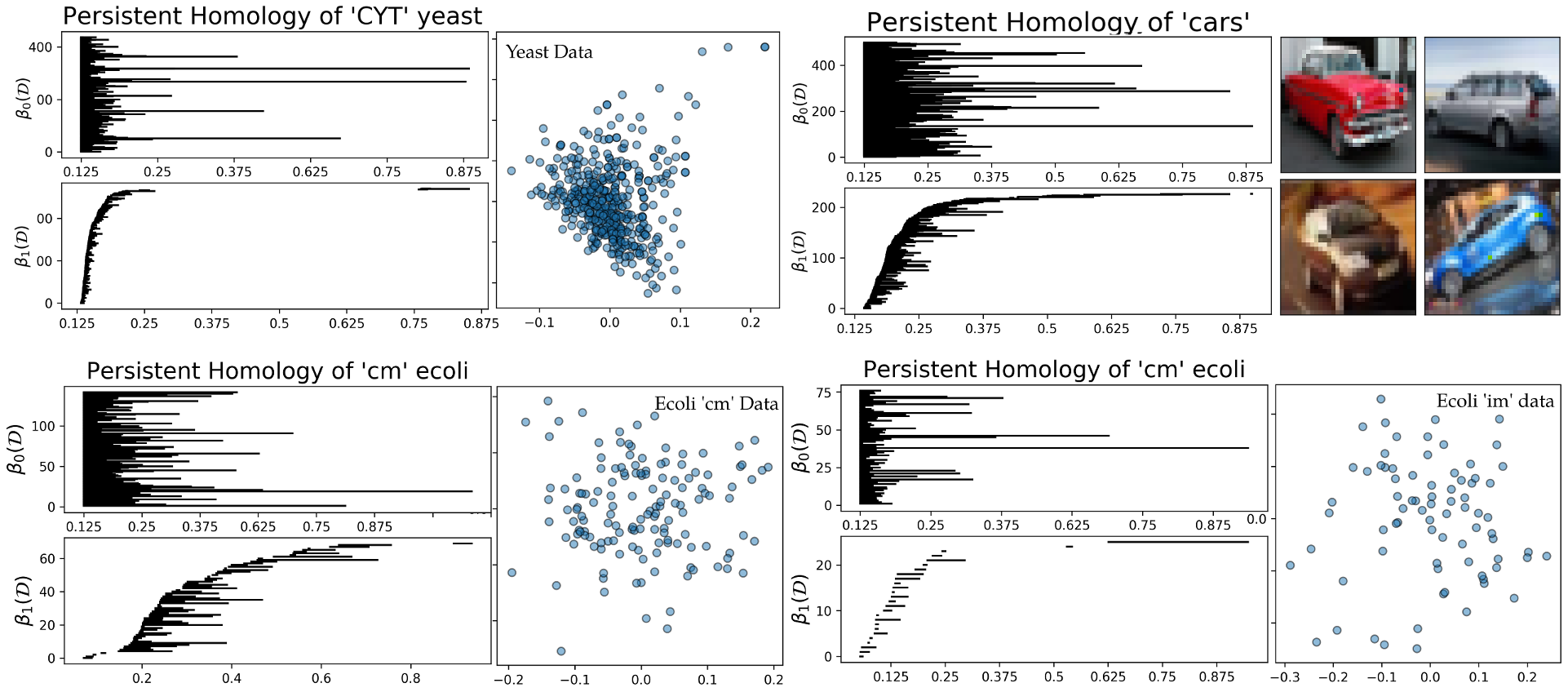}
		\caption{The topological persistence diagrams of several datasets. In each plot, the barcode diagram is given for $0$ and $1$ dimensional features along with a $2$-dimensional embedding of each dataset. Note that we do not compute the homologies of the datasets in the embeeding with the exception of CIFAR.}
	\end{center}
\end{figure*}

\begin{figure*}
	\begin{center}
	\includegraphics[height=0.40\textwidth]{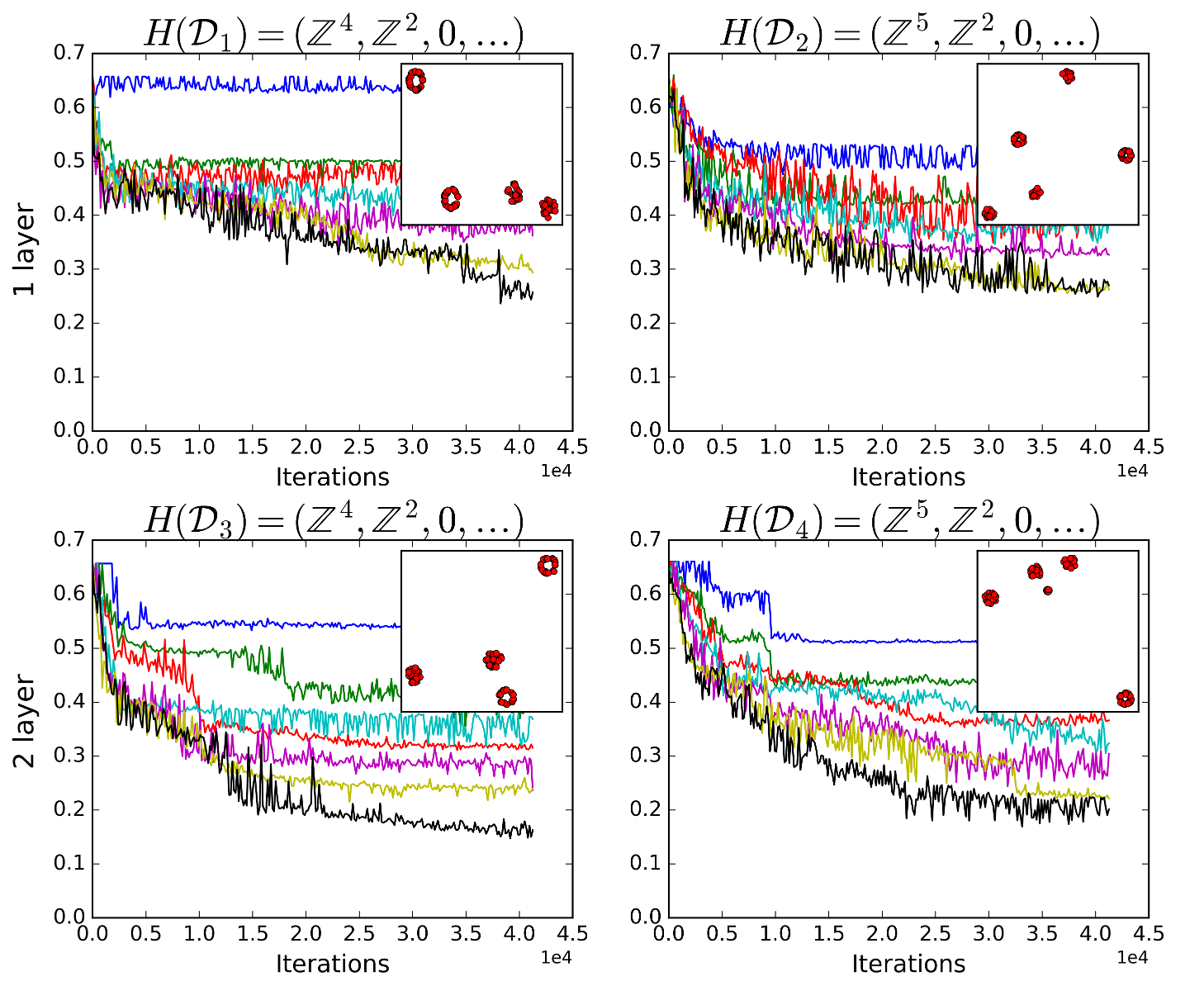}
	\includegraphics[height=0.40\textwidth]{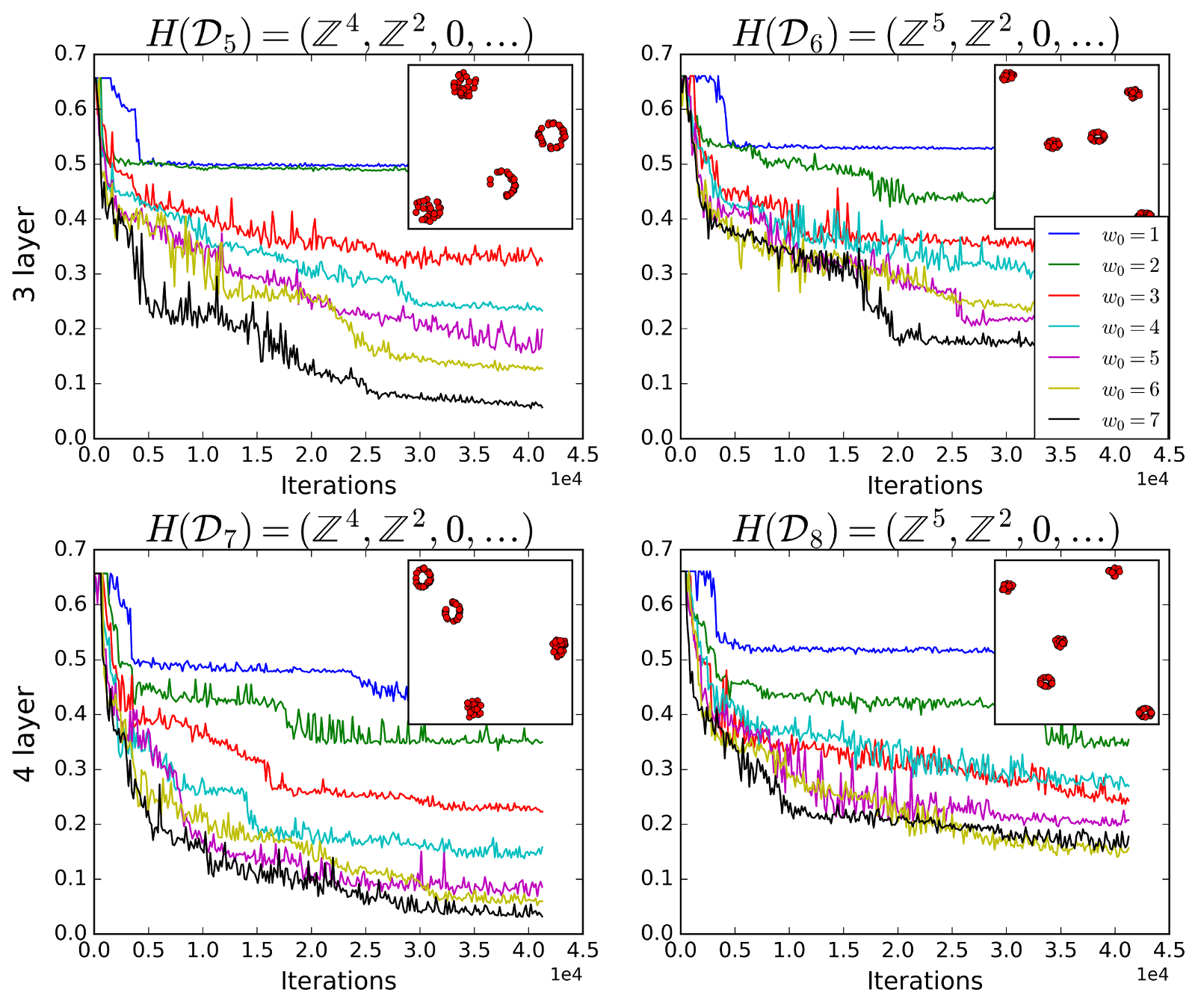}\\
	\includegraphics[height=0.40\textwidth]{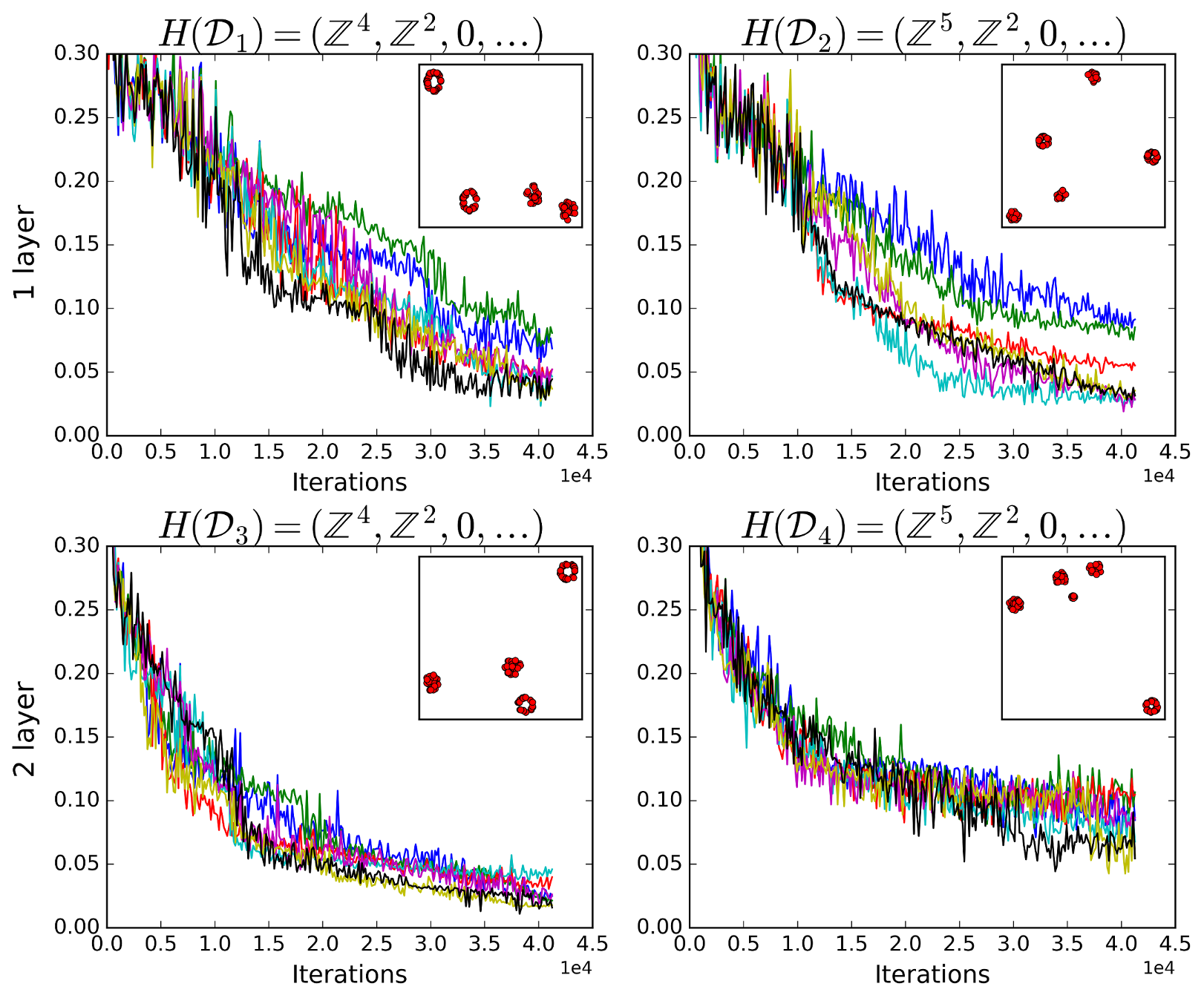}
	\includegraphics[height=0.40\textwidth]{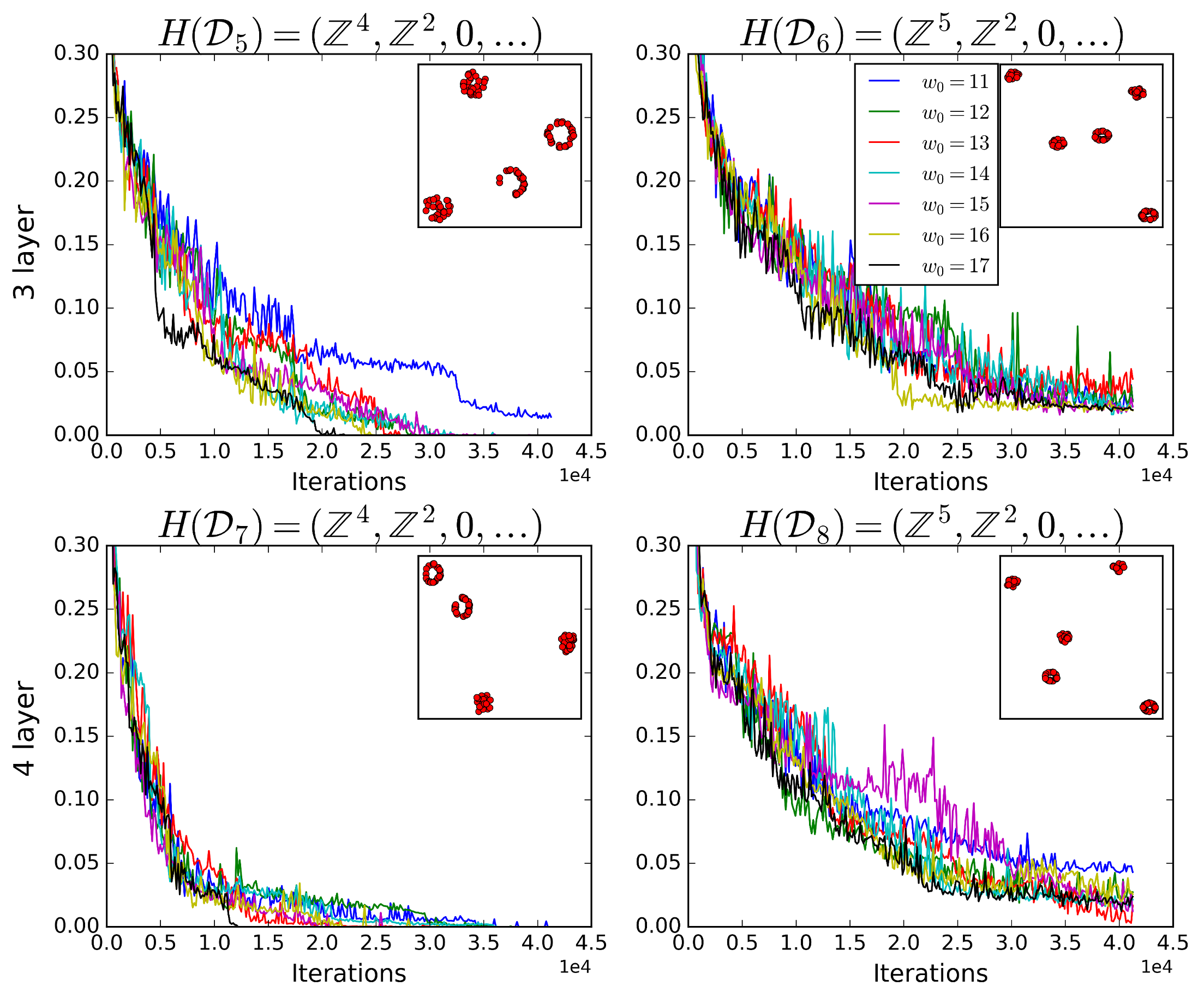}
	\end{center}

	\caption{Additional topological phase transitions in low dimensional neural networks as the homological complexity of the data increases. The upper right corner of each plot is a dataset on which the neural networks of increasing first layer hidden dimension are trained.  }
\end{figure*}

\end{document}